\documentclass{ifacconf}

\usepackage[english,american]{babel}

\usepackage{url}
\usepackage{epsfig} 
\usepackage{times} 
\usepackage{amsmath}
\usepackage{amsfonts}
\usepackage{amssymb}  
\usepackage{algorithm}
\usepackage{algpseudocode}

\usepackage{xcolor}
\usepackage{transparent}
\usepackage{graphicx}
\graphicspath{{media/}}
\usepackage{tikz}
\usetikzlibrary{shapes,arrows}
\usetikzlibrary{positioning}

\usepackage{epstopdf}
\usepackage{units}
\usepackage{pgfplots}

\usepackage{pstool}

\graphicspath{{media/}}
\newcommand{\executeiffilenewer}[3]{%
 \ifnum\pdfstrcmp{\pdffilemoddate{#1}}%
 {\pdffilemoddate{#2}}>0%
 {\immediate\write18{#3}}\fi%
}

\newcommand{%
 \input{.pdf_tex}%
}[1]{%
 \input{#1.pdf_tex}%
}

\newenvironment{proof}[1][Proof]{\begin{trivlist}
\item[\hskip \labelsep {\bfseries #1}]}{\end{trivlist}}

\newtheorem{proposition}{Proposition}[section]
\newtheorem{lemma}{Lemma}[section]

%

\pgfplotsset{every tick label/.append style={font=\footnotesize}}
\pgfdeclarelayer{background}
\pgfsetlayers{background,main}


\usepackage{graphicx}      
\usepackage{natbib}        
\begin{document}

\newcommand{\prox}{{\mathop{\mathrm{prox}}}}
\newcommand{\argmin}{\mathop{\mathrm{argmin}}}
\newcommand{\diag}{\mathop{\mathrm{diag}}}

\newcommand{\TT}{\ensuremath{\mathsf{\tiny{T}}}}
\newcommand{\T}{^{\TT}}
\newcommand{\Rot}[2][]{{#1}_{\textsc{\scriptsize{#2}}}}
\newcommand{\dRot}[2][]{\dot{#1}_{\textsc{\scriptsize{#2}}}}

\newcommand{\V}[2][]{{^{\textsc{\scriptsize{#2}}}{#1}}}
\newcommand{\dV}[2][]{{^{\textsc{\scriptsize{#2}}}\dot{#1}}}
\newcommand{\uV}[2][]{{_{\textsc{\scriptsize{#2}}}\vec{e}_{#1}}}

\newcommand{\tmat}[1]{\widetilde{#1}}
\newcommand{\tmatt}[2][]{{^{\textsc{\scriptsize{#2}}}\widetilde{#1}}}
\newcommand{\dtmatt}[2][]{{^{\textsc{\scriptsize{#2}}}\dot{\widetilde{#1}}}}

\newcommand{\sym}[1]{\mathrm{symm}(#1)}
\newcommand{\skw}[1]{\mathrm{skew}(#1)}

\newcommand{\diff}[1][]{\mathrm{d}#1}
\newcommand{\dt}{\diff t }

\newcommand{\Vct}[1]{\mathrm{vec}(#1)}

\newcommand{\lmax}{\lambda_{\mathrm{max}}}
\newcommand{\lmin}{\lambda_{\mathrm{min}}}

\newcommand{\tr}[1]{\mathrm{tr}(#1)}

\newcommand{\E}{{\mathop{\mathrm{E}}}}

\begin{frontmatter}

\title{Adaptive Decision-Making with Constraints and Dependent Losses: Performance Guarantees and Applications to Online and Nonlinear Identification} 

\thanks[footnoteinfo]{The author thanks Fang Nan for fruitful discussions that led to parts of this work. He also thanks the German Research Foundation and the Branco Weiss Fellowship, administered by ETH Zurich, for the support.}

\author[First]{Michael Muehlebach} 

\address[First]{Max Planck Institute for Intelligent Systems, Max Planck Ring 4, 72076 Tuebingen, Germany (e-mail: michaelm@tue.mpg.de).}

\begin{abstract}                
: We consider adaptive decision-making problems where an agent optimizes a cumulative performance objective by repeatedly choosing among a finite set of options. Compared to the classical prediction-with-expert-advice set-up, we consider situations where losses are constrained and derive algorithms that exploit the additional structure in optimal and computationally efficient ways. Our algorithm and our analysis is instance dependent, that is, suboptimal choices of the environment are exploited and reflected in our regret bounds. The constraints handle general dependencies between losses (even across time), and are flexible enough to also account for a loss budget, which the environment is not allowed to exceed. The performance of the resulting algorithms is highlighted in two numerical examples, which include a nonlinear and online system identification task. 
\end{abstract}

\begin{keyword}
adaptive decision-making; online learning; hedge; multiplicative weights; optimization; system identification; nonlinear system identification
\end{keyword}

\end{frontmatter}

\section{Introduction}
The multiplicative weights (or hedge) algorithm is a cornerstone of online learning. It not only provides a unified framework for many important algorithms in computer science (see \cite{Arora}), but represents the basis for online convex optimization, \citep{HazanOnlineConvex} and bandit algorithms, \citep{BanditAlgorithms}. However, despite being both simple and successful, it has arguably received little attention in the control community. Indeed, the fact that the algorithm often deals with decisions over a fixed and finite set of choices and involves either a stationary or a strongly adversarial environment can be perceived as show-stoppers for control problems, which involve dynamics, transients, and decisions over continuous spaces. The article shows that, despite some of these seemingly fundamental differences, the adaptive-decision making framework has the potential to bring new perspectives to control theory and offers flexible tools for designing computationally efficient algorithms with theoretical guarantees. This will be highlighted by introducing a new variant of multiplicative weights that accounts for constraints. These constraints can be used to model a-priori knowledge, uncertainty, dependencies across time, and can account for a loss budget of the environment. We also show that the concept of regret minimization provides a natural framework for nonlinear and online system identification problems.

\emph{Related work:} Adaptive decision-making and online learning has a very rich history, \citep{Bianchi}, \citep{Littlestone}, \citep{Shalev-Shwartz}, due to its connections to game theory, economics, machine learning, and optimization. In its basic set-up, an agent chooses among a finite set of options at every iteration. The environment, which chooses the losses, is assumed to be adversarial, resulting in the typical non-asymptotic regret bound $\sqrt{T \text{log}(m)/2}$ \citep{GTalive}. More recently, various extensions of this setting have been considered. For example, when both, the agent and the environment, make their choices according to the same (randomized) procedure, see e.g. \cite{Schapire}, faster convergence is achieved, resulting in a regret that scales with $\log(m) \log(T)^4$, \citep{Daskalakis2}. A different approach has been taken in \cite{Rakhlin}, where either deterministic or stochastic constraints are imposed on the adversary. The work characterizes the impact of constraints on online learning rates that are theoretically achievable. An even more general, non-parametric setting has been considered in \cite{Daskalakis}, where the authors quantify the impact of constraints and construct an explicit algorithm based on a combination of (optimistic) multiplicative weights and a generalization of the Standard Optimal Algorithm, \citep{Littlestone2}.

The aim of our approach is different compared to these important works. Instead of exposing fundamental relations between the complexity of constraints and the attainable regret in the worst case, we provide an instance-dependent algorithm that is able to exploit knowledge about the constraints, while benefiting from a suboptimal play of the environment. We show that our algorithm is optimal in a precise min-max sense when the time horizon is large. Moreover, the algorithm recovers multiplicative weights in the absence of constraints and for an adversarial environment. Our research is in line with recent work in reinforcement learning \citep{Foster,AdaSearch}; and we will see that both aspects, the ability to cope with constraints and the instance-dependency will lead to vast improvements over multiplicative weights.

\emph{Structure:} The article is structured as follows. Sec.~\ref{Sec:Prelim} formulates the adaptive decision-making problem and discusses the multiplicative weights algorithm. Sec.~\ref{Sec:DMC} presents our variant of multiplicative weights that can handle constraints. Sec.~\ref{Sec:Intervals} analyzes box constraints, which provides important intuition and optimality guarantees for our algorithms. The article concludes with two numerical examples in Sec.~\ref{Sec:Numerics} and a summary in Sec.~\ref{Sec:Conclusion}.

\section{Preliminaries}\label{Sec:Prelim}
We start by summarizing the classical adaptive decision-making framework and the multiplicative weights algorithm. This sets the stage for the later developments.

\subsection{Problem formulation}
We consider the set-up described in Alg.~\ref{Alg:setup}, where an agent has to choose among a set of $m$ options during $T$ iterations (both $m$ and $T$ are positive integers). At iteration $t$, the agent has access to the history of past losses, which are represented by the vectors $l_j \in \mathbb{R}^m$, $j=0,1,\dots,t-1$, as well as the set $X_t$ that constrains $l_t$. He chooses the option $a_t$, the environment then reveals $l_t\in X_t$, and inflicts the loss $l_t^{a_t}$ on the agent. In general, the sets $X_t$ change dynamically over time and may be influenced by past losses and actions, with the only requirement that at time $t$, the agent has knowledge of $X_{t}$. Throughout the article, we use subscripts to denote time (or the iteration number) and superscripts to denote the elements of a vector. As we will see, the agent has a strong incentive to randomize his choices. We therefore introduce the variable $p_t\in \Delta_m$ to represent the agent's probability distribution over the options $\{1,\dots,m\}$ at time $t$, where $\Delta_m$ refers to the $m$-dimensional probability simplex. More precisely, we have
\begin{equation*}
\Pr(a_t=i~|~l_{t-1},\dots,l_0)=p_t^i,
\end{equation*}
and as a result, the expected loss $\E[l_t^{a_t}~|~l_{t-1},\dots,l_0]$ of the agent at time $t$ can be expressed concisely as $p_t\T l_t$.
The agent seeks to minimize regret, which is defined as
\begin{equation}
\sum_{t=0}^{T-1} p_t\T l_t - \min_{i\in \{1,\dots,m\}} \sum_{t=0}^{T-1} l_t^i. \label{eq:regret}
\end{equation}
The regret compares the expected performance of the agent (conditioned on the past) to the performance of the best single option had the agent known all the losses $l_t, t=0,\dots, T-1$ beforehand. At first sight, this performance metric seems weak, since it does not compare against strategies that switch between options. However, we will see that in system identification, for example, there is really one single optimal option, which amounts to the best model among a set of candidates. Moreover, our formalism can also account for a small number of strategies that switch (for example, play option 1 if $t$ is even and option 2 if $t$ is odd), since these can simply be added to the set of options $\{1,\dots,m\}$. In the following, we will derive uniform upper bounds on \eqref{eq:regret}; uniform in the sense that the bounds hold for any strategy of the environment (any sequence $l_0,l_1,\dots$). This includes randomized strategies, or even strategies that depend on the agent's past choices $a_{t-1},\dots,a_0$. We will also derive lower bounds that conversely take any potential strategy of the agent into account. Compared to the traditional setting \citep{Bianchi} the losses of the environment are restricted to the sets $X_t$, which are both known to the environment and the agent.

\begin{algorithm}
\caption{Adaptive decision-making framework with constraints.}
\begin{algorithmic}
\For{$t=0~\mathrm{to}~T-1$}
	\State 1. agent chooses $a_t\in \{1,2,\dots,m\}$ 
	\State \qquad $\rhd~ a_t$ may be the outcome of a random event
	\State \qquad $\rhd~ a_t$ may depend on $l_{t-1}$, $l_{t-2},\dots, l_0$
	\State \qquad $\rhd~ a_t$ may depend on $X_t, X_{t-1}, \dots, X_0$
	\State
	\State 2. environment reveals loss $l_t\in X_t \subset \mathbb{R}^m$
	\State \qquad $\rhd~ l_t$ may be the outcome of a random event
	\State \qquad $\rhd~ l_t$ may depend on $a_{t-1},\dots,a_0$
	\State 
	\State 3. decider incurs loss $l_t^{a_t}$
\EndFor
\State
\State \textbf{Objective agent:}$\sum\limits_{t=0}^{T-1} p_t\T l_t - \!\!\min\limits_{i\in \{1,\dots,m\}} \sum\limits_{t=0}^{T-1} l_t^i\rightarrow \min$
\end{algorithmic}
\label{Alg:setup}
\end{algorithm}

\subsection{The multiplicative weights (hedge) algorithm}
The following subsection summarizes the multiplicative weights algorithm, which is sometimes also referred to as hedge. The multiplicative weights algorithm is designed for $X_0=X_1=\dots=X_{T-1}=[0,L]\times \dots \times [0,L]$, where $L>0$ is a (potentially large) constant. This represents an adversarial setup, where the agent has essentially no prior knowledge except that losses are positive. (It turns out that minor extensions can handle situations where $L$ is unknown a-priori and could become arbitrarily large.) The multiplicative weights algorithm is summarized in Alg.~\ref{Alg:MW} and is particularly appealing due to its simplicity. At each iteration, the weights $w_t^i$ are formed and represent the performance of option $i$ up to iteration $t-1$. The probability distribution $p_t^i$ is chosen to be proportional to $w_t^i$, and can be motivated as follows:
\begin{equation}
p_t:= \argmin_{q\in \Delta_m} q\T \sum_{j=0}^{t-1} l_j - \frac{1}{\epsilon} H(q), \label{eq:update}
\end{equation}
where $H(q):=\sum_{i=1}^{m} -q_i \log(q_i)$ denotes the Shannon entropy of the distribution $q\in \Delta_m$. The objective function in \eqref{eq:update} balances the observed performance over the past iterations with the entropy of $q$, which accounts for the uncertainty over the environment's choice of $l_t$. The amount of entropy regularization is controlled with the parameter $\epsilon$ and varies with the number of rounds that are played (if $T$ is large, the weight on the regularization is increased, if $T$ is small the weight is decreased). 

\begin{algorithm}
\caption{Multiplicative weights}
\begin{algorithmic}
\State $\epsilon \gets \sqrt{8 \log(m)/T}/L$
\For{$t=0~\mathrm{to}~T-1$}
	\State $w_{t-1}^i \gets \exp(-\epsilon \sum_{j=0}^{t-1} l_j^i)$ \Comment{$w_{-1}^i=1$ for $t=0$}
	\State $p_t^i \gets w_{t-1}^i/(\sum_{i=0}^{m} w_{t-1}^i)$
	\State Sample $a_t$ according to $p_t$
\EndFor
\end{algorithmic}
\label{Alg:MW}
\end{algorithm}

\subsection{Analysis of the multiplicative weights algorithm}\label{Sec:MW}
This subsection analyzes the regret that follows from Alg.~\ref{Alg:MW} in a setting where $X_0=\dots=X_{T-1}=[0,L] \times \dots \times [0,L]$. This represents the starting point for the subsequent derivations.

The analysis of Alg.~\ref{Alg:MW} hinges on the (Lyapunov) function $\phi_t:=\sum_{i=1}^{m} w_t^i$. The reason why the analysis of $\phi_t$ is particularly useful is that $\phi_t$ is lower bounded by any $w_t^j, j=1,2,\dots,m$, and, in particular, by the weight corresponding to the best single option in hindsight. Moreover, $\phi_t$ relates to $\phi_{t-1}$ in the following way:
\begin{align}
\phi_t&=\sum_{i=1}^{m} w_t^i=\phi_{t-1} \sum_{i=1}^{m} \frac{w_{t-1}^i}{\phi_{t-1}} \exp(-\epsilon l_t^i) \label{eq:rec1}\\
&=\underbrace{\phi_{t-1} \exp(-\epsilon l_t\T p_t)}_{\text{part~i}} \underbrace{\sum_{i=1}^{m} \frac{w_{t-1}^i}{\phi_{t-1}} \exp(-\epsilon \sum_{j=1}^{m} p_t^j (l_t^i-l_t^j))}_{\text{part~ii}} \nonumber,
\end{align}
where we have used the definition of $w_t^i$ in the second step, before factoring out $\exp(-\epsilon l_t\T p_t)$ in the last step. Up to this point we have not used the fact that $w_{t-1}^i$ and $p_t^i$ are related, as for example prescribed by Alg.~\ref{Alg:MW}. Thus, the above equation for $\phi_t$ holds for any choice of losses and any $p_t\in \Delta_m$. We also note that part i of the expression captures the running cost of the agent, i.e., the first term in \eqref{eq:regret}, whereas $\phi_t$ dominates the performance of the best single option up to time $t$ and relates to the second term in \eqref{eq:regret}. Hence, part ii, which is the ratio between $\phi_t$ and part i, directly determines the regret, and we will therefore design $p_t$, such that part ii is small. To this end, we reformulate part ii by applying Taylor's theorem to the exponential
\begin{align}
\text{part~ii}&=1 - \epsilon \sum_{i=1}^{m} \sum_{j=1}^m \frac{w_{t-1}^i}{\phi_{t-1}} p_t^j (l_t^i-l_t^j) \nonumber\\
&+ \frac{\epsilon^2}{2} \sum_{i=1}^{m} \frac{w_{t-1}^i}{\phi_{t-1}} \left( \sum_{j=1}^{m} p_t^j (l_t^i-l_t^j) \right)^2 + \frac{\epsilon^3}{6} \rho_t, \label{eq:importantEq}
\end{align}
where $\rho_t$ captures terms of order $\epsilon^3$ and higher. We note that $\rho_t$ is bounded by $L^3 \exp(\epsilon L)$. In order to simplify the presentation we assume $\epsilon L \leq 1$, which is obtained for $T\geq 8\text{log}(m)$. (The assumption can be avoided by applying Hoeffding's lemma to \eqref{eq:rec1}, which, as we will see in the next section, is too coarse for our purposes.) As discussed above, the key for obtaining a small regret is to design $p_t$ such that part ii is small. In multiplicative weights, this is achieved with $p_t^i=w_{t-1}^i/\phi_{t-1}$, which ensures that the \emph{first-order} term vanishes due to the fact that the summand $p_{t}^i p_t^j (l_t^i-l_t^j)$ is skew symmetric. Hence, the environment's choice of $l_t$ can at most affect second-order terms. We further note that the second-order term describes nothing but the variance of $l_t^{a_t}$, which by applying Popoviciu's inequality, can be bounded by $L^2/4$. (The bound is attained if the adversary chooses the losses $l_t^i$ in such a way that half of the probability $p_t$ is concentrated on losses $l_t^i=0$ and half of the probability $p_t$ is concentrated on losses $l_t^i=L$.) This yields therefore
\begin{equation*}
\text{part~ii} \leq 1 + \epsilon^2 L^2/8 + \epsilon^3 \rho_t/6.
\end{equation*}
As we will see in the following, terms of order $\epsilon^3$ are essentially irrelevant, as they contribute at most to additive constants in the final regret bound.
We continue by bounding the right-hand side, where we exploit the assumption $\epsilon L \leq 1$ and $\rho_t\leq L^3 \exp(1)$ to obtain
\begin{equation*}
\text{part~ii} \leq 1+ \epsilon^2 L^2/8 + \epsilon^3 L^3/2 \leq \exp(\epsilon^2 L^2/8+ \epsilon^3 L^3/2),
\end{equation*}
where we have again used a Taylor expansion of the exponential in the last step. Unrolling the recursion for $\phi_t$, see \eqref{eq:rec1}, yields therefore $\max_{i\in \{1,\dots,m\}} w_{T-1}^i \leq \phi_{T-1}$ and
\begin{multline*}
\phi_{T-1} \leq \phi_{-1} \exp(-\epsilon \sum_{t=0}^{T-1} l_t\T p_t + \epsilon^2 L^2 T/8 + \epsilon^3 L^3 T/2).
\end{multline*}
We now rearrange terms, note that $\phi_{-1}=m$, and insert the definition of $\epsilon$, which yields finally
\begin{align*}
\sum_{t=0}^{T-1} p_t\T l_t - \!\!\!\!\min_{i\in \{1,\dots,m\}} \!\sum_{t=0}^{T-1} l_t^i &\leq L \sqrt{\text{log}(m) T/2}+4L \text{log}(m).
\end{align*}
We note that all of our steps, except for the application of Popoviciu's inequality, are tight for small $\epsilon$, i.e., for a large time horizon $T$. It is therefore not surprising that there is a matching lower bound, which indeed holds for large $T$ and $m$. Moreover, the additional constant $4L \text{log}(m)$ can be avoided with a more careful bookkeeping. We summarize the sharpened results in the next proposition.
\begin{proposition}\label{Prop:MW} (See, e.g., \citet[Ch.~18]{GTalive})
The regret of Alg.~\ref{Alg:MW} is bounded by
\begin{equation*}
\sum_{t=0}^{T-1} p_t\T l_t - \!\!\!\!\min_{i\in \{1,\dots,m\}} \!\sum_{t=0}^{T-1} l_t^i \leq L \sqrt{\text{log}(m) T/2}.
\end{equation*}
Moreover, for any decision algorithm and constant $\epsilon'>0$ there exists $T>0$, $m>0$ and a deterministic loss sequence $l_0,l_1,\dots,$ such that
\begin{equation*}
\sum_{t=0}^{T-1} p_t\T l_t - \!\!\!\!\min_{i\in \{1,\dots,m\}} \!\sum_{t=0}^{T-1} l_t^i \geq L \sqrt{\text{log}(m) T/2} - \epsilon'.
\end{equation*}
\end{proposition}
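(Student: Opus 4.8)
The plan is to treat the two bounds separately, as they require genuinely different tools.

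For the \textbf{upper bound} I would avoid the third-order Taylor expansion used in the excerpt (which forces the assumption $\epsilon L\le 1$ and leaves the stray additive term $4L\log(m)$); instead I would bound part~ii in \eqref{eq:rec1} directly with Hoeffding's lemma. Writing $p_t^i=w_{t-1}^i/\phi_{t-1}$, the factor $\sum_i (w_{t-1}^i/\phi_{t-1})\exp(-\epsilon l_t^i)$ is exactly $\E_{i\sim p_t}[\exp(-\epsilon l_t^i)]$; since $-l_t^i\in[-L,0]$ has range $L$, Hoeffding's lemma gives $\E_{i\sim p_t}[\exp(-\epsilon l_t^i)]\le \exp(-\epsilon p_t\T l_t+\epsilon^2 L^2/8)$, hence $\phi_t\le \phi_{t-1}\exp(-\epsilon p_t\T l_t+\epsilon^2L^2/8)$ for every $t$. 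Unrolling from $\phi_{-1}=m$ and using $\phi_{T-1}\ge \max_j w_{T-1}^j=\exp(-\epsilon\min_j\sum_t l_t^j)$, taking logarithms and rearranging yields $\sum_t p_t\T l_t-\min_i\sum_t l_t^i\le \log(m)/\epsilon+\epsilon L^2 T/8$. Minimizing the right-hand side over $\epsilon$ returns exactly $\epsilon=\sqrt{8\log(m)/T}/L$ (the value in Alg.~\ref{Alg:MW}) and optimal value $L\sqrt{\log(m)T/2}$. This is the promised ``more careful bookkeeping'': Hoeffding absorbs the remainder and the spurious constant in one step.

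For the \textbf{lower bound} I would use an oblivious randomized adversary together with the probabilistic method. Take $X_t=[0,L]^m$ and draw each coordinate $l_t^i$ independently and uniformly from $\{0,L\}$. For any algorithm, $a_t$ (hence $p_t$) is measurable with respect to $l_0,\dots,l_{t-1}$ and therefore independent of $l_t$, so $\E[p_t\T l_t]=\sum_i\E[p_t^i]\,\E[l_t^i]=L/2$, giving $\E[\sum_t p_t\T l_t]=TL/2$ for \emph{every} algorithm. Writing $S^i:=\sum_{t=0}^{T-1}l_t^i$, the expected regret equals $TL/2-\E[\min_i S^i]=\E[\max_i(TL/2-S^i)]$, and the whole task reduces to lower bounding this quantity.

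Here I would pass to two limits. For fixed $m$ the $S^i$ are independent, each a scaled sum of $T$ i.i.d.\ Bernoulli variables with mean $TL/2$ and variance $TL^2/4$; by the multivariate central limit theorem the vector with entries $(S^i-TL/2)/(L\sqrt{T}/2)$ converges in distribution to a standard Gaussian vector $Z\in\mathbb{R}^m$ with independent coordinates. Since $v\mapsto\min_i v_i$ is continuous and the normalized variables are uniformly integrable (each $S^i$ lies in $[0,TL]$), $\E[\min_i S^i]=TL/2-(L\sqrt{T}/2)(\E[\max_i Z_i]+o_T(1))$. Combining with the classical extreme-value asymptotics $\E[\max_i Z_i]=\sqrt{2\log m}\,(1-o_m(1))$ for i.i.d.\ standard Gaussians, the normalized expected regret $\E[\mathrm{regret}]/(L\sqrt{\log(m)T/2})$ tends to $\E[\max_i Z_i]/\sqrt{2\log m}\to 1$, first as $T\to\infty$ and then as $m\to\infty$. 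Finally, because the expected regret under the random sequence attains this value, the probabilistic method furnishes a single deterministic realization $l_0,l_1,\dots$ whose regret is at least the expectation. The main obstacle is the delicate ordering and joint control of these two limits: the additive slack ``$-\epsilon'$'' must be read in the asymptotic sense ``for $T$ and $m$ large'' signalled in the text, since the Gaussian gap $\sqrt{2\log m}-\E[\max_i Z_i]$ and the CLT error $o_T(1)$ must be made negligible relative to the \emph{growing} benchmark, which forces $m$ large before $T$ and requires the Gaussian approximation to be uniform across all $m$ coordinates (a Berry--Esseen or union-type estimate). Establishing that uniform integrability and joint control---rather than the two pointwise limits in isolation---is where the real work lies; the expectation computation and the averaging step extracting the deterministic sequence are routine by comparison.
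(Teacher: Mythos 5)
Your upper-bound argument is correct and complete, but note what ``the paper's own proof'' actually is: Prop.~\ref{Prop:MW} is never proved in the text. Sec.~\ref{Sec:MW} derives only the weaker bound $L\sqrt{\log(m)T/2}+4L\log(m)$ via the third-order Taylor expansion of \eqref{eq:rec1} (under the extra assumption $\epsilon L\le 1$), and both the sharp constant and the lower bound are deferred to the reference cited in the proposition. Your Hoeffding route is precisely the ``more careful bookkeeping'' the paper alludes to, and the paper itself remarks that Hoeffding's lemma applied to \eqref{eq:rec1} removes the assumption $\epsilon L\le 1$: with $p_t=w_{t-1}/\phi_{t-1}$, part~ii is the moment generating function of a random variable supported on an interval of length $L$, giving the per-round factor $\exp(-\epsilon p_t\T l_t+\epsilon^2L^2/8)$, and optimizing $\log(m)/\epsilon+\epsilon L^2T/8$ indeed returns $\epsilon=\sqrt{8\log(m)/T}/L$ and the value $L\sqrt{\log(m)T/2}$. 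The only thing you ``lose'' relative to the in-text Taylor expansion is the explicit second-order term, which is exactly what Sec.~\ref{Sec:DMC} needs later (this is why the paper calls Hoeffding ``too coarse for our purposes''); for Prop.~\ref{Prop:MW} itself your route is the standard and correct one.

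For the lower bound, your construction (i.i.d.\ uniform $\{0,L\}$ losses, $\E[p_t\T l_t]=L/2$ for every algorithm, CLT plus Gaussian extreme-value asymptotics, then the probabilistic method) is the standard argument of the cited reference, and your iterated-limit order ($T\to\infty$ for fixed $m$, then $m\to\infty$) is the right one. The genuine issue---which you sense but cannot resolve, because it is not resolvable---is that this argument only yields the normalized statement, expected regret $\ge(1-o(1))\,L\sqrt{\log(m)T/2}$, and no amount of Berry--Esseen uniformity converts it into the literal additive form: the deficit is of order $\frac{L\sqrt{T}}{2}\bigl(\sqrt{2\log m}-\E[\max_i Z_i]\bigr)\approx \frac{L\sqrt{T}\,\log\log m}{4\sqrt{2\log m}}$, which diverges as $T\to\infty$ for any fixed $m$; forcing it below $\epsilon'$ would require $\log m \gg T$, a regime in which the benchmark $L\sqrt{T\log(m)/2}$ exceeds the maximal expected regret $TL/2$ of this construction and the Gaussian approximation is no longer valid (large-deviation rather than CLT regime). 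So the additive ``$-\epsilon'$'' must simply be read as the asymptotic ratio statement (as the cited reference states it, and as the paper signals with ``holds for large $T$ and $m$''); indeed, taken literally the quantification is even vacuous, since $m=1$ gives regret $0\ge -\epsilon'$. In short: your proof is a correct proof of the result the paper intends and cites, with the upper bound done exactly the standard way; the only correction needed is to replace the hoped-for ``joint control of the two limits'' by the recognition that only the multiplicative version holds.
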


\section{Decision-making with constraints}\label{Sec:DMC}
As we have seen in the previous section, the analysis of multiplicative weights hinges on a Taylor series expansion in \eqref{eq:rec1} and an appropriate choice of $p_t$ that results in vanishing first-order terms. However, we also realize that the final bound arises from the sum of all the second-order terms. This motivates our research, which develops an update scheme that factors in constraints and improves the second-order terms.

Our treatment starts by parametrizing $p_t$ in the following way
\begin{equation}
p_t= \frac{w_{t-1}}{\phi_{t-1}} - \frac{\epsilon}{2} \underbrace{\left( \frac{\text{diag}(w_{t-1})}{\phi_{t-1}} - \frac{w_{t-1} w_{t-1}\T}{\phi_{t-1}^2} \right)}_{:=Q_{t-1}} q_t, \label{eq:param}
\end{equation}
where $q_t \in \mathbb{R}^m$ will be the decision-variable of our algorithm. The parameterization is designed in such a way that $\sum_{i=1}^{m} p_t^i=1$ for all $q_t\in \mathbb{R}^m$ and recovers multiplicative weights for $q_t=0$. We also note that $Q_{t-1}$ is positive semi-definite (see App.~\ref{App:C}). As a consequence of the parametrization, we obtain the following expression for the remainder term part ii in \eqref{eq:importantEq}
\begin{equation*}
\text{part~ii}=1 + \frac{\epsilon^2}{2} l_t\T Q_{t-1} (l_t-q_t) + \frac{\epsilon^4}{4} (l_t\T Q_{t-1} q_t)^2 + \frac{\epsilon^3}{6} \rho_t.
\end{equation*}
We therefore conclude that the second-order term can be actively controlled, for example by choosing $q_t$ as follows
\begin{align}
r_t^*:=\min_{q_t\in \mathcal{Q}_{t}}&\max_{l_t \in X_t} l_t\T Q_{t-1} (l_t-q_t), \label{eq:prob1}
\end{align}
where the set $\mathcal{Q}_t$ is defined as
\begin{equation*}
\mathcal{Q}_t:=\Big\{ q \in \mathbb{R}^m ~|~q\T \mathbf{1} = 0, ~\mathbf{1} \geq  \frac{\epsilon}{2} (q - \mathbf{1} w_{t-1}\T q/\phi_{t-1})\Big\},
\end{equation*}
with $\mathbf{1}:=(1,\dots,1)\T\in \mathbb{R}^m$. The set imposes $p_t\geq 0$ and includes a normalization with $q\T \mathbf{1}=0$, since both \eqref{eq:param} and \eqref{eq:prob1} are invariant to transformations of the type $q' \gets q + \delta \mathbf{1}$. We can express the objective function in \eqref{eq:prob1} more succinctly by following elementary manipulations:
\begin{align*}
\sum_{i,j=1}^{m} \!\frac{w_{t-1}^i w_{t-1}^j}{2\phi_{t-1}^2}\!\left( (l_t^i-l_t^j)^2 \!- \!(l_t^i-l_t^j) (q_t^i-q_t^j)\right).
\end{align*}
We note that \eqref{eq:prob1} amounts to a somewhat pessimistic perspective, where the agent considers the worst case over all the possible choices $l_t\in X_t$ of the environment. However, compared to multiplicative weights, we do not assume that all weights $w_{t-1}^i$, $i=1,\dots,m$ are equal (this done when applying Popoviciu's inequality), which allows Alg.~\ref{Alg:EMW} and our analysis to adapt and exploit an environment that plays in a suboptimal manner. Furthermore, the constraints $X_t$ are allowed to depend on past actions $a_{t-1}, a_{t-2}, \dots$ and similarly on past losses $l_{t-1},l_{t-2}, \dots$, and can therefore also account for budget constraints on the losses $l_t$. 

The remaining part of this section establishes a regret bound for Alg.~\ref{Alg:EMW}, which is based on choosing $q_t$ according to \eqref{eq:prob1}.

\begin{algorithm}
\caption{Multiplicative weights with constraints}
\begin{algorithmic}
\Require $c_1>0$, $c_2>0$ \Comment{e.g. $c_1=10^{-2}$, $c_2=L^2/4$}
\State $\bar{r}^* \gets c_1 \log(m)^{2/3} L^2 T^{-1/3}$
\For{$t=0~\mathrm{to}~T-1$}
	\State $\epsilon_{t-1} \gets \sqrt{\frac{2 \text{log}(m)}{c_2+\sum_{j=0}^{t-1} \tilde{r}_j^*}}$ \Comment{$\epsilon_{-1}=\sqrt{2 \text{log}(m)/c_2}$}
	\State $w_{t-1}^i \gets \exp(-\epsilon_{t-1} \sum_{j=0}^{t-1} l_j^i)$ \Comment{$w_{-1}^i=1$}
	\State $\phi_{t-1} \gets \sum_{i=1}^m w_{t-1}^i$
	\State $q_t^* \gets \argmin_{q_t\in \mathcal{Q}_t} \max_{l_t\in X_t} l_t\T Q_{t-1} (l_t-q_t)$
	\State $p_t \gets \frac{w_{t-1}}{\phi_{t-1}} - \frac{\epsilon_{t-1}}{2} Q_{t-1} q_{t}^*$
	\State Sample $a_t$ according to $p_t$, observe $l_t$
	\State $\tilde{r}_t^* \gets \max\{ l_t\T Q_{t-1} (l_t-q_t^*),\bar{r}^* \}$
\EndFor
\end{algorithmic}
\label{Alg:EMW}
\end{algorithm}
We note that Alg.~\ref{Alg:EMW} adapts the parameter $\epsilon_{t-1}$ on the go. (This is due to the fact that $r_t^*$ depends on $w_t^i$, which is only available at time $t$.) In order to simplify the exposition, we start by analyzing the constant step-size version of Alg.~\ref{Alg:EMW}; it turns out that the adaptive algorithm can be analyzed with the same ideas, see App.~\ref{App:proof}. Furthermore, in our analysis the constant $L$ will be defined as 
\begin{equation*}
L:=\max_{t\in \{0,1,\dots,T-1\}} \max_{l_t\in X_t} |l_t^i-l_t^j|.
\end{equation*}
According to the previous discussion, we obtain the following bound for part ii:
\begin{equation*}
\text{part~ii} \leq 1 + \epsilon^2 r_t^*/2 + \epsilon^3 L^3/2,
\end{equation*}
where $r_t^*$ is given by \eqref{eq:prob1} and we have used the fact that $q_t\T Q_{t-1} l_t \leq L^2/4$ and that $\rho_t$ (see \eqref{eq:importantEq}) is bounded by $L^3 \exp(1)$. We note that this matches almost verbatim the derivation of multiplicative weights. Thus, by following the same steps, which include unrolling the recursion for $\phi_t$, rearranging terms and dividing by $\epsilon$ we obtain
\begin{equation*}
\sum_{t=0}^{T-1} p_t\T l_t - \!\!\!\!\min_{i\in \{1,\dots,m\}} \!\sum_{t=0}^{T-1} l_t^i \leq \text{log}(m)/\epsilon + \epsilon \sum_{t=0}^{T-1} r_t^*/2 + \epsilon^2 L^3 T/2. 
\end{equation*}
By choosing the parameter $\epsilon$ to be $\sqrt{2 \text{log}(m)/\sum_{j=0}^{T-1} r_j^*}$, we finally arrive at the following bound for the regret
\begin{equation*}
\sum_{t=0}^{T-1} p_t\T l_t - \!\!\!\!\min_{i\in \{1,\dots,m\}} \!\sum_{t=0}^{T-1} l_t^i \leq \sqrt{2 \text{log}(m) \sum_{j=0}^{T-1} r_j^*} + \frac{\text{log}(m) L^3}{\bar{r}^*},
\end{equation*}
where $\bar{r}^*>0$ denotes a lower bound on $r_j^*$.\footnote{We assume $r_j^*\geq 0$ without loss of generality. If $\sum_{j=0}^{T-1} r_j^*= \mathcal{O}(T^{2/3})$, then an even smaller regret on the order $T^{1/3}$ can be achieved. It is clear that a slightly improved choice of $\epsilon$ can balance the two cases, i.e., transition from a $\mathcal{O}(\sqrt{T})$ to a $\mathcal{O}(T^{1/3})$ regret. This is included in Alg.~\ref{Alg:EMW}, but we have omitted further explanations to simplify the presentation.} The first term on the right-hand side, closely resembles the regret bound from the previous section. In fact, for $X_t=[0,L]\times \dots \times [0,L]$, $r_t^*$ can be bounded by $L^2/4$, which recovers the result from the previous section. However, if the accumulation of $r_j^*$ is small, the algorithm proposed in this section largely outperforms multiplicative weights. Alg.~\ref{Alg:EMW}, which chooses the parameter $\epsilon$ adaptively, achieves almost the same result, as summarized by the following proposition. The proof is a bit more involved due to the fact that $\epsilon$ is varying, see App.~\ref{App:proof}.
\begin{proposition}\label{Prop:RE}
Let $c_2=2 \log(m) L^2$. Then, the regret of Alg.~\ref{Alg:EMW} is bounded by
\begin{multline*}
\sum_{t=0}^{T-1} p_t\T l_t - \!\!\!\!\min_{i\in \{1,\dots,m\}} \!\sum_{t=0}^{T-1} l_t^i \leq 3 \sqrt{ \text{log}(m) \sum_{j=0}^{T-1} \tilde{r}_j^*}\\
+ \frac{7 \log(m) L^3}{4 \bar{r}^*} \text{log}\left(8 \log(m) + 4/L^2 \sum_{j=0}^{T-1} \tilde{r}_j^*\right).
\end{multline*}
\end{proposition}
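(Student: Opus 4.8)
The plan is to reproduce the constant-step-size derivation of Sec.~\ref{Sec:MW} on a per-round basis with the \emph{current} rate $\epsilon_{t-1}$, and then to absorb the variation of $\epsilon$ through a telescoping argument on a soft-min potential. I would introduce the cumulative losses $L_t^i := \sum_{j=0}^{t} l_j^i$ and the regularized soft-min
\[
D_t(\epsilon) := -\frac{1}{\epsilon}\log\Big(\frac{1}{m}\sum_{i=1}^m \exp(-\epsilon L_t^i)\Big),
\]
so that $D_{T-1}(\epsilon)\le \min_{i}L_{T-1}^i + \log(m)/\epsilon$ while $D_{-1}(\epsilon)=0$. Writing the one-round ``mix loss'' as $m_t := D_t(\epsilon_{t-1})-D_{t-1}(\epsilon_{t-1}) = -\epsilon_{t-1}^{-1}\log(\phi_t/\phi_{t-1})$, the factorization \eqref{eq:rec1}, the expansion of part~ii, and the optimality of $q_t^*$ in \eqref{eq:prob1} give, at the current rate, $m_t \ge p_t\T l_t - \tfrac{1}{2}\epsilon_{t-1}\tilde r_t^* - \tfrac{1}{2}\epsilon_{t-1}^2 L^3$. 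Here I would use that the realized value $l_t\T Q_{t-1}(l_t-q_t^*)$ is at most $r_t^*$ and at most $\tilde r_t^*$ by the clipping in Alg.~\ref{Alg:EMW}, and that $q_t^*\T Q_{t-1} l_t\le L^2/4$ together with $\rho_t\le L^3\exp(1)$ controls the remainder exactly as in Sec.~\ref{Sec:MW}.

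Before summing I would verify that this Taylor bound is legitimate at \emph{every} round, i.e. $\epsilon_{t-1}L\le 1$. This is precisely the role of $c_2=2\log(m)L^2$: it forces $\epsilon_{-1}=1/L$, and since each $\tilde r_j^*\ge\bar r^*>0$ enlarges the denominator of $\epsilon_{t-1}$, the sequence $(\epsilon_{t-1})$ is non-increasing, so $\epsilon_{t-1}L\le 1$ throughout. The heart of the argument is then the telescoping across rounds despite the moving rate. I would decompose
\[
\sum_{t=0}^{T-1} m_t = \big(D_{T-1}(\epsilon_{T-1})-D_{-1}(\epsilon_{-1})\big) + \sum_{t=0}^{T-1}\big(D_t(\epsilon_{t-1})-D_t(\epsilon_t)\big),
\]
where the first group telescopes. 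The key lemma is that $\epsilon\mapsto D_t(\epsilon)$ is non-increasing (the cumulant/entropic-risk monotonicity of $\epsilon^{-1}\log\E[e^{-\epsilon X}]$); combined with $\epsilon_{t-1}\ge\epsilon_t$ this renders every correction term $D_t(\epsilon_{t-1})-D_t(\epsilon_t)$ non-positive, so the entire correction sum may be dropped for an upper bound. Together with the per-round lower bound on $m_t$ this yields the master inequality
\[
\sum_{t=0}^{T-1} p_t\T l_t - \min_{i}\sum_{t=0}^{T-1} l_t^i \le \frac{\log m}{\epsilon_{T-1}} + \frac{1}{2}\sum_{t=0}^{T-1}\epsilon_{t-1}\tilde r_t^* + \frac{L^3}{2}\sum_{t=0}^{T-1}\epsilon_{t-1}^2.
\]

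It remains to substitute $\epsilon_{t-1}=\sqrt{2\log(m)/(c_2+\sum_{j<t}\tilde r_j^*)}$ and evaluate the three terms, writing $S:=\sum_{j=0}^{T-1}\tilde r_j^*$. The first equals $\sqrt{\tfrac{1}{2}\log(m)\,(c_2+S)}$; the second is the standard adaptive sum $\sqrt{\tfrac{1}{2}\log m}\sum_t \tilde r_t^*/\sqrt{c_2+\sum_{j<t}\tilde r_j^*}$, which I would bound by a telescoping/integral inequality of the form $\sum_t a_t/\sqrt{c+\sum_{j\le t}a_j}\le 2\sqrt{c+\sum_t a_t}$ after relating the open partial sum $\sum_{j<t}$ to $\sum_{j\le t}$ (here $\tilde r_t^*\le L^2/4\ll c_2$ keeps the two comparable). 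Adding these and applying $\sqrt{a+b}\le\sqrt a+\sqrt b$ with $c_2=2\log(m)L^2$ collapses them into the leading $3\sqrt{\log(m)\,S}$, the slack over $2\sqrt2$ absorbing the $\sqrt{c_2}$ offset. For the third term, $\tfrac{L^3}{2}\sum_t\epsilon_{t-1}^2 = \log(m)L^3\sum_t (c_2+\sum_{j<t}\tilde r_j^*)^{-1}$; since $\sum_{j<t}\tilde r_j^*\ge t\bar r^*$ the partial sums grow at least linearly, so a comparison with $\int \mathrm{d}x/(c_2+\bar r^* x)$ gives $\tfrac{\log(m)L^3}{\bar r^*}$ times a logarithm whose argument is proportional to $c_2+S=\tfrac{L^2}{4}(8\log m + 4L^{-2}S)$, matching the second term of the proposition.

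I expect the main obstacle to be the treatment of the varying step size: justifying that the correction terms may be discarded (the monotonicity of $D_t$ in $\epsilon$ and the monotonicity of $(\epsilon_{t-1})$), ensuring the Taylor remainder bound is valid at every round through the calibration of $c_2$, and carefully handling the offset $c_2$ and the open partial sum $\sum_{j<t}$ in the adaptive sum so that the constants $3$ and $7/4$ come out as stated.
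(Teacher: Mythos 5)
Your proposal is correct, and it reaches the stated bound, but it is worth separating what is genuinely different from what is the paper's argument in disguise. The paper (App.~\ref{App:proof}) unrolls the recursion for $\phi_t$ directly, applying at each step the inequality $\sum_i x_i^{\alpha}\le m^{1-\alpha}(\sum_i x_i)^{\alpha}$ with $\alpha=\epsilon_t/\epsilon_{t-1}\le 1$, and collecting the resulting $m^{1-\alpha}$ factors into the telescoping term $\log(m)\sum_t (\epsilon_{t-1}-\epsilon_t)/(\epsilon_{t-1}\epsilon_t)$. Your key lemma --- monotonicity of $D_t(\epsilon)$ in $\epsilon$ --- is exactly this power-mean inequality restated for the potential, and your master inequality coincides with the paper's once its telescope is evaluated: both reduce to $\log(m)/\epsilon_{T-1}$ plus the same second- and third-order sums. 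The genuine divergence is in how those two sums are then bounded. The paper proves two bespoke induction claims, \eqref{eq:claim1} and \eqref{eq:claim2}, whereas you invoke the standard adaptive-step inequality $\sum_t a_t/\sqrt{c+\sum_{j\le t}a_j}\le 2\sqrt{c+\sum_t a_t}$ (with the open/closed partial-sum mismatch controlled by $\tilde r_t^*\le L^2/4\le c_2$) and an integral comparison based on $s_{t-1}\ge c_2+t\bar r^*$, which uses the clipping $\tilde r_j^*\ge \bar r^*$ of Alg.~\ref{Alg:EMW} directly. Both routes work; yours replaces the inductions by off-the-shelf lemmas, and your bound $\frac{1}{\bar r^*}\log(s_{T-1}/c_2)+\frac{1}{c_2}$ for the third-order sum is in fact slightly tighter than \eqref{eq:claim2}.

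One bookkeeping claim should be corrected: the $\sqrt{c_2\log m}$ offset produced by $\sqrt{c_2+S}\le\sqrt{c_2}+\sqrt{S}$ is \emph{not} absorbed by the slack between $2+1/\sqrt{2}$ and $3$ in the leading term. That offset equals $(2+1/\sqrt{2})\sqrt{2}\,\log(m)L$, while the slack is only about $0.29\sqrt{\log(m)S}$, which can be arbitrarily small when $\sum_j \tilde r_j^*$ is small (e.g.\ small $T$). As in the paper, this offset, together with the $L/2$ left over from the third-order term, must instead be pushed into the logarithmic term: since $\bar r^*\le L^2/4$ and the argument of the logarithm is at least $8\log 2$, the gap between the stated coefficient $7/4$ and the raw coefficient arising from the third-order sum dominates these leftover $\mathcal{O}(\log(m)L)$ terms. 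With that reallocation your constants close, so this is a repairable slip in the final assembly rather than a gap in the argument.
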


\section{Intervals}\label{Sec:Intervals}
In the previous section $X_t$ was allowed to be an arbitrary closed subset of $\mathbb{R}^m$. We now turn to the case where $X_t=[\underline{l}_t^1,\bar{l}_t^1]  \times \dots \times [\underline{l}_t^m,\bar{l}_t^m]$, that is, where each loss $l_t^i$ is restricted to the interval $[\underline{l}_t^i,\bar{l}_t^i]$. This leads to an important simplification in \eqref{eq:prob1}, which is summarized by the following lemma.
\begin{lemma}\label{Lemma:Attmax}
Let $q_t\in \mathbb{R}^m$ be an arbitrary vector, and $X_t=[\underline{l}_t^1,\bar{l}_t^1]  \times \dots \times [\underline{l}_t^m,\bar{l}_t^m]$. Then $l_t\T Q_{t-1} (l_t-q_t)$ achieves its maximum for $l_t\in X_t$ on the corners of $X_t$, that is, when $l_t\in \{\underline{l}_t^1, \bar{l}_t^1\} \times \dots \times \{\underline{l}_t^m,\bar{l}_t^m\}$.
\end{lemma}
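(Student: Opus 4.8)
The plan is to exploit the fact that, with $q_t$ held fixed (it is the outer minimization variable, so the inner maximum treats it as a constant), the map $l_t \mapsto l_t\T Q_{t-1}(l_t - q_t)$ is a \emph{convex} quadratic in $l_t$. Expanding gives $l_t\T Q_{t-1} l_t - l_t\T Q_{t-1} q_t$, whose Hessian with respect to $l_t$ equals $2 Q_{t-1}$, since $Q_{t-1}$ is symmetric. Because $Q_{t-1}$ is positive semi-definite (App.~\ref{App:C}), the Hessian is positive semi-definite and the objective is convex; the affine term $-l_t\T Q_{t-1} q_t$ does not affect this. The maximum of a continuous function over the compact box $X_t$ exists, so it remains to show that it can be located at a corner.

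First I would argue coordinate by coordinate, which keeps the proof elementary and avoids invoking general polytope theory. Fix all entries of $l_t$ except the $k$-th, and view the objective as a function of the scalar $l_t^k \in [\underline{l}_t^k, \bar{l}_t^k]$. It is a univariate quadratic whose leading coefficient is the diagonal entry $(Q_{t-1})_{kk}$, which is nonnegative because $Q_{t-1}$ is positive semi-definite (concretely, $(Q_{t-1})_{kk} = (w_{t-1}^k/\phi_{t-1})(1 - w_{t-1}^k/\phi_{t-1}) \geq 0$). A convex univariate quadratic on a closed interval attains its maximum at one of the two endpoints, so replacing $l_t^k$ by the better of $\underline{l}_t^k$ and $\bar{l}_t^k$ never decreases the objective.

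The remaining step is to iterate this replacement over $k = 1, \dots, m$. Starting from any maximizer $l_t^* \in X_t$, processing each coordinate in turn drives every entry to an endpoint while never decreasing the value, producing a corner $\hat{l}_t \in \{\underline{l}_t^1, \bar{l}_t^1\} \times \dots \times \{\underline{l}_t^m, \bar{l}_t^m\}$ with objective value at least that of $l_t^*$. Since $l_t^*$ was a maximizer, $\hat{l}_t$ must also be one, which is the claim. Equivalently, one could simply invoke the standard fact that a convex function on a polytope attains its maximum at an extreme point, together with the observation that the extreme points of the box $X_t$ are precisely its corners.

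I do not expect a genuine obstacle here: the only points requiring care are confirming that $q_t$ enters the inner maximization only through the affine term, so that convexity is preserved, and that the positive semi-definiteness of $Q_{t-1}$ yields the needed nonnegativity of the leading coefficients. Both are immediate from the definition of $Q_{t-1}$ in \eqref{eq:param}.
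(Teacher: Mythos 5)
Your proof is correct, but it is structured differently from the paper's. The paper argues by contradiction: assuming a maximizer $l_t^*$ has some coordinate $j$ strictly inside its interval, it perturbs along $e_j$ and uses the expansion $(l_t^*)\T Q_{t-1}(l_t^*-q_t) + \delta\, e_j\T Q_{t-1}(2l_t^*-q_t) + \delta^2\, e_j\T Q_{t-1} e_j$, where the key fact is that $e_j\T Q_{t-1}e_j = (w_{t-1}^j/\phi_{t-1})(1-w_{t-1}^j/\phi_{t-1})$ is \emph{strictly} positive (the weights are exponentials, hence positive, and $m\geq 2$); choosing the sign of $\delta$ according to the sign of the linear term then strictly improves the value, a contradiction. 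Your argument is constructive rather than by contradiction: you round a maximizer to a corner one coordinate at a time, using only that each univariate restriction is a convex quadratic with \emph{nonnegative} leading coefficient $(Q_{t-1})_{kk}$. Both proofs rest on the same underlying observation (coordinate-wise convexity coming from the diagonal of $Q_{t-1}$), but the trade-off is worth noting: the paper's perturbation argument proves the stronger statement that \emph{every} maximizer lies at a corner, at the price of requiring strict positivity of the diagonal entries — an assumption that fails in the degenerate situation, mentioned in the footnote to Prop.~4.2, where dominated options are assigned $w_{t-1}^i=0$. Your version needs only positive semi-definiteness, so it survives that degeneracy, and it establishes exactly what the lemma asserts (the maximum is attained at some corner), which is all that Prop.~4.3 uses. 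Your closing remark — that a convex function on a polytope attains its maximum at an extreme point — is a third, fully general route that subsumes both arguments.
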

\begin{proof}
The proof is by contradiction and starts by assuming that the maximizer $l_t^*\in X_t$ has at least one component $j$, which is in the interior of the interval $[\underline{l}_t^j,\bar{l}_t^j]$. We now consider $l_t=l_t^*+e_j \delta$, where $e_j\in \mathbb{R}^m$ is the $j$th unit vector of the standard basis in $\mathbb{R}^m$, and $\delta \in \mathbb{R}$ is a small scalar. For small values of $\delta$, we are guaranteed that $l_t\in X_t$. Moreover, evaluating the objective function at $l_t$ yields
\begin{equation*}
(l_t^*)\T Q_{t-1} (l_t^*-q_t^*) + \delta e_j\T Q_{t-1} (2 l_t^*-q_t) + \delta^2 e_j\T Q_{t-1} e_j,
\end{equation*}
where $e_j\T Q_{t-1} e_j$ is strictly positive due to the positivity of the weights $w_{t-1}$. The term linear in $\delta$ is either nonnegative or strictly negative. Hence, by choosing a small enough $\delta$, either $\delta>0$ or $\delta <0$, the above expression exceeds the optimal function value $(l_t^*)\T Q_{t-1} (l_t^*-q_t^*)$, which leads to a contradiction.\hfill\qed
\end{proof}
Lemma~\ref{Lemma:Attmax} leads to the following important conclusions, as summarized by the next proposition.

\begin{proposition}\label{Prop:main}
Let $E_t:=\{\underline{l}_t^1,\bar{l}_t^1\}  \times \dots \times \{\underline{l}_t^m,\bar{l}_t^m\}$. Then, the minmax problem stated in \eqref{eq:prob1} is equivalent to the linear program
\begin{multline*}
r_t^* = \min_{q_t \in \mathcal{Q}_t, \xi\in \mathbb{R}} \xi \quad \text{s.t.}\quad \xi \geq l_t\T Q_{t-1} (l_t-q_t), ~\forall l_t\in E_t,
\end{multline*}
which includes $2^m+m+1$ linear constraints. Moreover, the resulting $q_t^*$ is optimal in the following sense: there exists a randomized strategy $l_t^*$ of the environment, such that
\begin{equation*}
\text{E}_{l_t^*}[ {l_t^*}\T Q_{t-1} (l_t^*-q)]\geq r_t^*, \quad \forall q\in \mathcal{Q}_t.
\end{equation*}
\end{proposition}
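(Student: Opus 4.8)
The plan is to handle the two assertions in turn, both of which build directly on Lemma~\ref{Lemma:Attmax}. For the linear-program reformulation I would start from the lemma, which guarantees that for every fixed $q_t$ the inner maximization in \eqref{eq:prob1} is attained on the finite corner set $E_t$, so that $\max_{l_t\in X_t} l_t\T Q_{t-1}(l_t-q_t)=\max_{l_t\in E_t} l_t\T Q_{t-1}(l_t-q_t)$. Introducing an epigraph variable $\xi$ then converts the minimization of this pointwise maximum into minimizing $\xi$ subject to $\xi\geq l_t\T Q_{t-1}(l_t-q_t)$ for each $l_t\in E_t$. The observation that makes this a genuine linear program is that, for a fixed corner $l_t$, the term $l_t\T Q_{t-1}l_t$ is a constant and $-l_t\T Q_{t-1}q_t$ is linear in $q_t$, so each of these constraints is affine in $(q_t,\xi)$. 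Counting then gives $2^m$ constraints from the corners in $E_t$, the $m$ inequalities $\mathbf{1}\geq\frac{\epsilon}{2}(q-\mathbf{1}w_{t-1}\T q/\phi_{t-1})$ defining $\mathcal{Q}_t$, and the single normalization equality $q\T\mathbf{1}=0$, for a total of $2^m+m+1$ linear constraints.

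For the optimality claim I would recast \eqref{eq:prob1} as a saddle-point problem. Representing a randomized strategy of the environment as a distribution $\mu$ over the corners $E_t$, define $g(q,\mu):=\sum_{l\in E_t}\mu_l\, l\T Q_{t-1}(l-q)=\E_{l\sim\mu}[l\T Q_{t-1}(l-q)]$. Since the maximum of the linear functional $\mu\mapsto g(q,\mu)$ over the simplex $\Delta_{E_t}$ is attained at a vertex, the reformulation of the first part yields $r_t^*=\min_{q\in\mathcal{Q}_t}\max_{\mu\in\Delta_{E_t}} g(q,\mu)$. The content of the proposition is precisely that the associated maximin equals this minimax and that the outer optimum over $\mu$ is attained: letting $\mu^*$ achieve $\max_{\mu}\min_{q} g(q,\mu)$ and taking $l_t^*$ distributed according to $\mu^*$ gives $\E_{l_t^*}[{l_t^*}\T Q_{t-1}(l_t^*-q)]=g(q,\mu^*)\geq\min_{q'} g(q',\mu^*)=r_t^*$ for every $q\in\mathcal{Q}_t$, which is exactly the asserted inequality.

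Establishing this equality is where I expect the real work to lie. The function $g$ is affine, hence convex, in $q$ and affine, hence concave, in $\mu$, and $\Delta_{E_t}$ is convex and compact, so a minimax theorem (von Neumann's, or Sion's in its general form) applies, yielding $\min_q\max_\mu g=\max_\mu\min_q g$ together with attainment of the supremum over the compact simplex by some $\mu^*$. The point that needs care is the feasible set $\mathcal{Q}_t$. Its convexity is immediate, being the intersection of a hyperplane with half-spaces, but for the inner minima to be attained and $r_t^*$ to be a genuine minimum I would also verify that $\mathcal{Q}_t$ is bounded, which is not automatic for a polyhedron cut out by one equality and $m$ inequalities. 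Boundedness follows by showing the recession cone is trivial: a recession direction $v$ must satisfy $v^i\leq w_{t-1}\T v/\phi_{t-1}$ for all $i$, i.e. every coordinate lies below the strictly positively weighted average of the coordinates, which is possible only if all coordinates coincide, and then $v\T\mathbf{1}=0$ forces $v=0$. With convexity of $\mathcal{Q}_t$ and compactness of $\Delta_{E_t}$ in hand, the minimax theorem closes the argument and $\mu^*$ supplies the claimed randomized strategy $l_t^*$.
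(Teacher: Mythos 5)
Your proposal is correct and follows essentially the same route as the paper: the LP reformulation is obtained as a direct corollary of Lemma~\ref{Lemma:Attmax} via an epigraph variable, and the optimality claim is established by viewing randomized strategies as distributions over $E_t$, invoking von Neumann's minimax theorem, and using vertex attainment of the inner linear maximization to identify the minimax value with $r_t^*$. The one point where you go beyond the paper is your recession-cone argument showing $\mathcal{Q}_t$ is bounded (every coordinate of a recession direction lying below its strictly positively weighted average forces all coordinates equal, and $v\T\mathbf{1}=0$ then forces $v=0$), a detail the paper asserts as ``$\mathcal{Q}_t$ is compact and convex'' without proof.
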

\begin{proof}
The first part is an immediate corollary of Lemma~\ref{Lemma:Attmax}. For the second part (optimality) we consider any randomized strategy $l_t$ that only selects elements in $E_t$. As a result, in expectation, the cost is given by
\begin{equation*}
\sum_{l\in E_t} l\T Q_{t-1} (l-q) \text{Pr}(l_t=l),
\end{equation*}
which is a linear function in $q$ and $\{\text{Pr}(l_t=l)\}_{l\in E_t}$. Moreover, $\mathcal{Q}_t$ is compact and convex, which yields, according to von Neumann's minimax theorem
\begin{multline}
\max_{\{\text{Pr}(l_t=l)\}_{l\in E_t}} \min_{q_t\in \mathcal{Q}_t} \sum_{l\in E_t} l\T Q_{t-1} (l-q_t) \text{Pr}(l_t=l) = \\
\min_{q_t\in \mathcal{Q}_t} \max_{\{\text{Pr}(l_t=l)\}_{l\in E_t}} \sum_{l\in E_t} l\T Q_{t-1} (l-q_t) \text{Pr}(l_t=l).\label{eq:minimax}
\end{multline}
We consider the maximum on the right-hand side: For a fixed $q_t$, the objective is linear in $\text{Pr}(l_t=l)$, which means that the maximum is attained by choosing $l_t\in E_t$ deterministically. In other words, the right-hand side achieves the same cost as
\begin{equation*}
\min_{q_t\in \mathcal{Q}} \max_{l_t\in E_t} l_t\T Q_{t-1} (l_t-q_t)=r_t^*.
\end{equation*}
Moreover, the maximum on the left-hand side of \eqref{eq:minimax} is guaranteed to be attained for some probability distribution $\{\text{Pr}(l_t^*=l)\}_{l\in E_t}$, which implies 
\begin{equation*}
\min_{q_t\in \mathcal{Q}_t} \sum_{l\in E_t} l\T Q_{t-1}(l-q_t) \text{Pr}(l_t^*=l) = r_t^*,
\end{equation*}
leading to the desired conclusion. \hfill \qed
\end{proof}
Prop.~\ref{Prop:main} provides three important results: i) $q_t$ can be obtained by solving a linear program; ii) the resulting $q_t$ is optimal, in the sense that there is no algorithm that is able to improve upon \eqref{eq:importantEq} for small enough $\epsilon$. We note that the second order terms in \eqref{eq:importantEq} determine the regret up to a constant (see Sec.~\ref{Sec:DMC}); iii) the minimax theorem gives us an explicit approach for computing the worst-case strategy $l_t^*$ of the environment. For a given sequence of $X_t$, we can therefore recursively compute the strategies $l_t^*$ for which any algorithm suffers at least the loss $r_t^*$, $t=0,2,\dots,T-1$ in expectation. Compared to the lower bound established in Prop.~\ref{Prop:MW}, this allows for a much more fine-grained analysis and points to the fact that the result in Prop.~\ref{Prop:RE} cannot be improved except for constant (additive and multiplicative) factors. 

Unfortunately, the linear program stated in Prop.~\ref{Prop:main} involves $2^m + m +1$ constraints. If $m$ is large, its solution can be computationally challenging. The next proposition provides a simple strategy for choosing $p_t$ when $m$ is large. The strategy is even optimal for $m=2$ (see next section) and also reduces to multiplicative weights, whenever $\bar{l}_t^1=\dots=\bar{l}_t^m$ and $\underline{l}_t^1=\dots=\underline{l}_t^m$ with $\bar{l}_t^i-\underline{l}_t^i=L$.
\begin{proposition}\label{Prop:Approx}
Without loss of generality we assume that all intervals overlap pairwise.\footnote{If the intervals do not overlap pairwise, we can easily exclude the intervals $i$, which are such that $\underline{l}_t^i\geq \bar{l}_t^j$ for some $j\in \{1,\dots,m\}$. This is done by setting the corresponding $p_t^i=0$ and $w_{t-1}^i=0$. This will also be discussed in the next section (see case $B$).} An approximate solution to \eqref{eq:prob1} is given by
\begin{equation*}
\hat{q}_t = \argmin_{q\in \mathcal{Q}_t} \sum_{i=1}^{m} \Big|q^i- \mu_t^i\Big|^2, ~\mu_t^i:=\bar{l}_t^i + \underline{l}_t^i - \frac{1}{m} \sum_{j=1}^{m} (\bar{l}_t^j+\underline{l}_t^j).
\end{equation*}
For small enough $\epsilon$, the approximate solution achieves 
\begin{align*}
\max_{l_t\in X_t} l_t\T Q_{t-1} (l_t-\hat{q}_t) &\leq \sum_{i,j=1, i\neq j}^{m}\frac{w_{t-1}^i w_{t-1}^j}{2\phi_{t-1}^2} (\bar{l}_t^i - \underline{l}_t^j) (\bar{l}_t^j-\underline{l}_t^i).
\end{align*}
\end{proposition}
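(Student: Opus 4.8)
The plan is to exploit the interval structure in three stages: reduce the inner maximization to the finitely many corners via Lemma~\ref{Lemma:Attmax}, identify $\hat{q}_t$ with the explicit centering vector $\mu_t$ for small $\epsilon$, and then bound the resulting objective term by term. First I would record the geometry: write each interval through its midpoint $c^i := (\bar{l}_t^i + \underline{l}_t^i)/2$ and radius $r^i := (\bar{l}_t^i - \underline{l}_t^i)/2 \geq 0$, so that a corner $l_t \in E_t$ is parametrized by signs $s^i \in \{-1,+1\}$ via $l_t^i = c^i + s^i r^i$. Directly from the definition, $\mu_t^i = 2c^i - \tfrac{1}{m}\sum_j 2c^j$, so that $\sum_i \mu_t^i = 0$ and, crucially, $\mu_t^i - \mu_t^j = 2(c^i - c^j)$.

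Next I would justify $\hat{q}_t = \mu_t$ for small $\epsilon$. Since $\mu_t$ already satisfies the normalization $q\T\mathbf{1}=0$, the only remaining conditions defining $\mathcal{Q}_t$ read $\mathbf{1} \geq \tfrac{\epsilon}{2}(\mu_t - \mathbf{1} w_{t-1}\T \mu_t / \phi_{t-1})$, whose right-hand side is a fixed vector scaled by $\epsilon$; hence for $\epsilon$ small enough $\mu_t \in \mathcal{Q}_t$. Being feasible, and since $\sum_i |q^i - \mu_t^i|^2$ is strictly convex with unconstrained minimizer $\mu_t$, it is the unique minimizer over $\mathcal{Q}_t$, so $\hat{q}_t = \mu_t$ exactly.

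Then I would substitute $\hat{q}_t = \mu_t$ into the expanded objective $\sum_{i,j}\frac{w_{t-1}^i w_{t-1}^j}{2\phi_{t-1}^2}\big[(l_t^i-l_t^j)^2 - (l_t^i-l_t^j)(q_t^i-q_t^j)\big]$. Setting $d_{ij}:=c^i-c^j$ and $\Delta_{ij}:=s^i r^i - s^j r^j$, we have $l_t^i - l_t^j = d_{ij} + \Delta_{ij}$ and $\mu_t^i - \mu_t^j = 2d_{ij}$, so a one-line completion of the square gives the identity $(l_t^i-l_t^j)^2 - (l_t^i-l_t^j)(\mu_t^i-\mu_t^j) = \Delta_{ij}^2 - d_{ij}^2$. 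This is the key simplification: the adversary's sign choice collapses into the single nonnegative quadratic $\Delta_{ij}^2$. Since the radii are nonnegative, $\Delta_{ij}^2 = (s^i r^i - s^j r^j)^2 \leq (r^i+r^j)^2$ for every sign pattern, uniformly in $s$, hence the bound survives the maximization over corners. Reassembling and using the factorization $(r^i+r^j)^2 - d_{ij}^2 = (r^i+r^j+d_{ij})(r^i+r^j-d_{ij}) = (\bar{l}_t^i - \underline{l}_t^j)(\bar{l}_t^j - \underline{l}_t^i)$, with the diagonal $i=j$ terms vanishing, yields exactly the claimed bound.

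I expect the main obstacle to be conceptual rather than computational: recognizing that the right decision variable is the midpoint-difference vector $\mu_t$, chosen precisely so that the cross term $(l_t^i-l_t^j)(\mu_t^i-\mu_t^j)$ cancels the midpoint part of $(l_t^i-l_t^j)^2$ and leaves only the radius contribution $\Delta_{ij}^2$. The term-by-term relaxation $\Delta_{ij}^2 \leq (r^i+r^j)^2$ is exactly where $\hat{q}_t$ is suboptimal relative to the true minmax value $r_t^*$ (a cleverer $q_t$ could coordinate the sign patterns it is penalized for), which is consistent with Prop.~\ref{Prop:Approx} claiming only an approximation. Finally, I note that the pairwise-overlap hypothesis enters only through the requirement $(\bar{l}_t^i - \underline{l}_t^j)(\bar{l}_t^j - \underline{l}_t^i) = (r^i+r^j)^2 - d_{ij}^2 \geq 0$, ensuring the right-hand side is a genuine nonnegative upper estimate.
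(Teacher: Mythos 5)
Your proposal is correct and takes essentially the same route as the paper's proof in App.~B: after setting $\hat q_t=\mu_t$ (valid for small $\epsilon$), both arguments bound the pairwise sum term by term at the extreme values of $l_t^i-l_t^j$, and your completing-the-square identity $(l_t^i-l_t^j)^2-(l_t^i-l_t^j)(\mu_t^i-\mu_t^j)=\Delta_{ij}^2-d_{ij}^2\leq (r^i+r^j)^2-d_{ij}^2=(\bar{l}_t^i-\underline{l}_t^j)(\bar{l}_t^j-\underline{l}_t^i)$ is an algebraic rephrasing of the paper's convexity-plus-equal-endpoints argument for the scalar quadratic $f_{ij}$. The only (welcome) addition is that you explicitly justify $\hat q_t=\mu_t$ via feasibility of $\mu_t$ in $\mathcal{Q}_t$ for small $\epsilon$ and unconstrained optimality, a step the paper merely asserts.
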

We note that the proposition chooses each $q_t^i$ as close as possible to $\mu_t^i$ subject to the constraint $\mathcal{Q}_t$. The value $\mu_t^i$ compares the mid point of the interval $i$ to the average of all mid points. 
The proposition will be further motivated in the next section and a proof is included in App.~\ref{App:B}.

\subsection{Closed-form solutions for $m=2$}
For $m=2$ we can compute the solution of \eqref{eq:prob1} in closed form. According to Lemma~\ref{Lemma:Attmax} we are guaranteed that the maximum over $l_t$ is attained for $l_t^1\in \{\underline{l}_t^1,\bar{l}_t^1\}$ and $l_t^2\in \{\underline{l}_t^2,\bar{l}_t^2\}$, and we can also rule out the cases $l_t^1=\underline{l}_t^1$, $l_t^2=\underline{l}_t^2$ and $l_t^1=\bar{l}_t^1$, $l_t^2=\bar{l}_t^2$. This leaves us with
\begin{multline*}
\min_{q_t\in \mathcal{Q}_t} \frac{w_{t-1}^1 w_{t-1}^2}{\phi_{t-1}^2} \max\{ (\bar{l}_t^{1}-\underline{l}_t^{2})^2 - (\bar{l}_t^1-\underline{l}_t^2) (q_t^1 - q_t^2), \\
(\underline{l}_t^1 - \bar{l}_t^2)^2 - (\underline{l}_t^1-\bar{l}_t^2) (q_t^1-q_t^2)\}.
\end{multline*}
The objective is therefore given by the maximum of two linear functions. More importantly, the weights $w_{t-1}$ merely affect the scaling of the objective function and are therefore irrelevant to the solution. This is no longer true for $m>2$. Depending on the values of $\bar{l}_t^1-\underline{l}_t^2$ and $\bar{l}_t^2-\underline{l}_t^1$ we distinguish three different cases, as summarized in Fig.~\ref{Fig:sketch}. (In total there are six different cases, but the remaining three can be reduced to $A$, $B$, $C$ by relabeling the losses $l_t^i$.) We note that in case $B$ the objective is unbounded below, which means that the resulting optimal strategy is given by $p_t^1=0$, $p_t^2=1$. This clearly matches our intuition, since the agent has no incentive in playing option $1$. In case $A$ and $C$ the optimal solution for \eqref{eq:prob1} is given by
\begin{equation*}
q_t^1=\begin{cases} \mu_t^{1}, & \text{for}~-\frac{1}{\epsilon} \frac{\phi_{t-1}}{w_{t-1}^1} \leq \mu_t^{1} \leq \frac{1}{\epsilon} \frac{\phi_{t-1}}{w_{t-1}^2}\\
\frac{1}{\epsilon} \frac{\phi_{t-1}}{w_{t-1}^2}, &\text{for}~\mu_t^{1} \geq \frac{1}{\epsilon} \frac{\phi_{t-1}}{w_{t-1}^2}\\
-\frac{1}{\epsilon} \frac{\phi_{t-1}}{w_{t-1}^1}, &\text{for}~\mu_t^{1} \leq -\frac{1}{\epsilon} \frac{\phi_{t-1}}{w_{t-1}^1},
\end{cases}
\end{equation*}
where $\mu_t^{1}$ is defined in Prop.~\ref{Prop:Approx}. By definition, we have $q_t^2=-q_t^1$. The previous equation has a very intuitive meaning, since $\mu_t^{1}$ is nothing but the difference between the midpoint of the interval $[\underline{l}_t^1, \bar{l}_t^1]$ and the midpoint of the interval $[\underline{l}_t^2,\bar{l}_t^2]$.

Provided that $q_t^1=\mu_t^1$ and $q_t^2=\mu_t^2$ the resulting cost is given by
\begin{equation*}
\frac{w_{t-1}^1 w_{t-1}^2}{\phi_{t-1}^2} (\bar{l}_t^1-\underline{l}_t^2) (\bar{l}_t^2-\underline{l}_t^1);
\end{equation*}
a lower cost is attained in the remaining cases for $\epsilon$ small enough. The term $(\bar{l}_t^1-\underline{l}_t^2) (\bar{l}_t^2-\underline{l}_t^1)$ can be reformulated as
\begin{equation*}
\underbrace{\left(\frac{\bar{l}_t^1-\underline{l}_t^1 + \bar{l}_t^2-\underline{l}_t^2}{2}\right)^2}_{\text{average~length}^2} - \underbrace{\left( \frac{\bar{l}_t^1 +\underline{l}_t^1}{2} - \frac{\bar{l}_t^2+\underline{l}_t^2}{2}\right)^2}_{\text{distance~between~center}^2},
\end{equation*}
which implies that for two intervals of fixed length, the worst-case cost is maximal if the intervals are centered and vanishes when $\bar{l}_t^1=\underline{l}_t^2$ or $\underline{l}_t^1=\bar{l}_t^2$. It also directly reduces to $L^2/4$ in the multiplicative weights setting and therefore generalizes the analysis from Sec.~\ref{Sec:MW}.

\begin{figure}
\center
\def\svgwidth{.95\columnwidth}
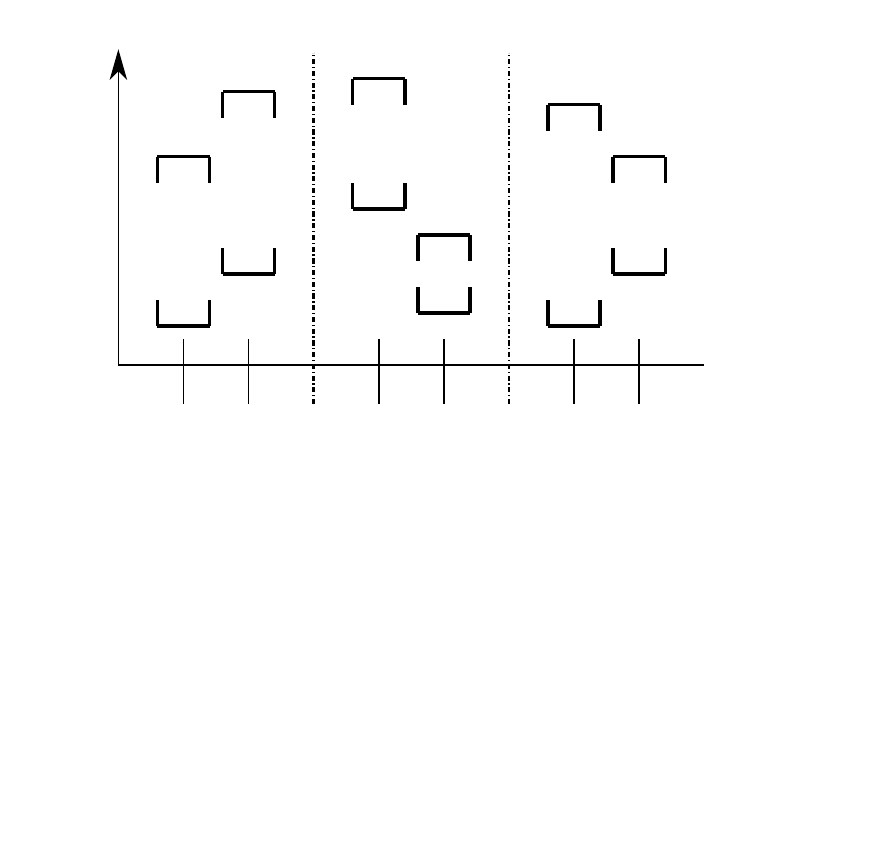
\caption{This figure illustrates the three different cases, where $A$ is given by $\bar{l}_t^1-\underline{l}_t^2\geq \bar{l}_t^1-\underline{l}_t^2\geq 0$, $B$ by $\bar{l}_t^1\geq \underline{l}_t^2$, and $C$ by $\bar{l}_t^1-\underline{l}_t^2\geq 0$, $\underline{l}_t^2- \bar{l}_t^2\leq 0$. In case $A$ the minimum over $q_t^1-q_t^2$ is attained for negative values, whereas for case $C$ the minimum is attained for positive values. For case $B$ the objective function is unbounded, which matches our intuition since the interval $[\underline{l}_t^1,\bar{l}_t^1]$ dominates $[\underline{l}_t^2,\bar{l}_t^2]$.}
\label{Fig:sketch}
\end{figure}

\section{Numerical examples}\label{Sec:Numerics}
The following section contains two numerical examples that highlight the effectiveness of Alg.~\ref{Alg:EMW}.

\subsection{Random intervals}
In the first example we choose $X_t=[\underline{l}_t^1,\bar{l}_t^1]\times \dots [\underline{l}_t^m,\bar{l}_t^m]$, where each $\underline{l}_t^i$ and $\bar{l}_t^i$ correspond to two independent samples from the uniform distribution over $[0,1]$. (The samples are ordered such that $\underline{l}_t^i \leq \bar{l}_t^i$.) The losses $l_t^i$ are obtained by sampling from the uniform distribution over $[\underline{l}_t^i, \bar{l}_t^i]$. Fig.~\ref{Fig:RI1} and Fig.~\ref{Fig:RI2} compare Alg.~\ref{Alg:EMW} to multiplicative weights for $T=200$ and $m=10$, and $T=200$, $m=100$. In the latter case the optimization \eqref{eq:prob1} is approximated with the method summarized in Prop.~\ref{Prop:Approx}. We note that Alg.~\ref{Alg:EMW} indeed exploits the available knowledge about the intervals $[\underline{l}_t^i,\bar{l}_t^i]$ leading to an accumulated cost that is substantially below the cost of the best single option in hindsight (negative regret).

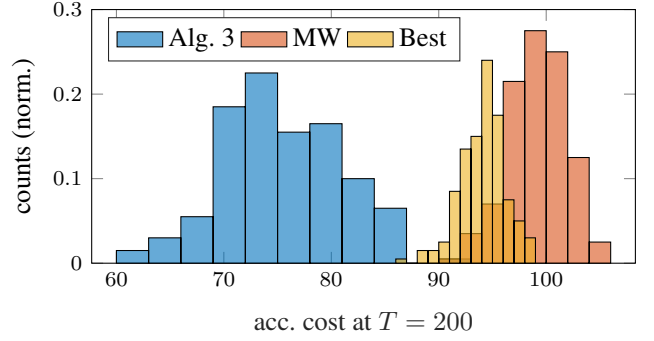
\begin{figure}
\newlength\figureheight 
\newlength\figurewidth
\setlength{\figureheight}{.38\columnwidth}
\setlength{\figurewidth}{.85\columnwidth}
%
%
\definecolor{mycolor1}{rgb}{0.00000,0.44700,0.74100}%
\definecolor{mycolor2}{rgb}{0.85000,0.32500,0.09800}%
\definecolor{mycolor3}{rgb}{0.92900,0.69400,0.12500}%
\begin{tikzpicture}

\begin{axis}[%
width=0.951\figurewidth,
height=\figureheight,
at={(0\figurewidth,0\figureheight)},
scale only axis,
xmin=57.7,
xmax=108.3,
xlabel style={font=\color{white!15!black}},
xlabel={acc. cost at $T=200$},
ymin=0,
ymax=0.3,
ylabel style={font=\color{white!15!black}},
ylabel={counts (norm.)},
ylabel near ticks,
axis background/.style={fill=white},
legend columns=-1,
legend style={legend cell align=left, align=left, draw=white!15!black},
legend pos=north west
]
\addplot[ybar interval, fill=mycolor1, fill opacity=0.6, draw=black, area legend] table[row sep=crcr] {%
x	y\\
60	0.015\\
63	0.03\\
66	0.055\\
69	0.185\\
72	0.225\\
75	0.155\\
78	0.165\\
81	0.1\\
84	0.065\\
87	0\\
90	0.005\\
93	0.005\\
};
\addlegendentry{Alg.~3}

\addplot[ybar interval, fill=mycolor2, fill opacity=0.6, draw=black, area legend] table[row sep=crcr] {%
x	y\\
90	0.005\\
92	0.035\\
94	0.07\\
96	0.215\\
98	0.275\\
100	0.25\\
102	0.125\\
104	0.025\\
106	0.025\\
};
\addlegendentry{MW}

\addplot[ybar interval, fill=mycolor3, fill opacity=0.6, draw=black, area legend] table[row sep=crcr] {%
x	y\\
86	0.005\\
87	0\\
88	0.015\\
89	0.015\\
90	0.025\\
91	0.085\\
92	0.135\\
93	0.15\\
94	0.24\\
95	0.175\\
96	0.075\\
97	0.05\\
98	0.03\\
99	0.03\\
};
\addlegendentry{Best}

\end{axis}
\end{tikzpicture}%
\caption{The figure shows the histogram of the accumulated cost for problems where the intervals $[\underline{l}_t^i,\bar{l}_t^i]$ are randomly generated in $[0,1]$. We set $m=10$ and $T=200$, and directly solve \eqref{eq:prob1}. ``Best" denotes the best option in hindsight and ``MW" multiplicative weights. We note that Alg.~\ref{Alg:EMW} largely outperforms the multiplicative weights algorithm.} \label{Fig:RI1}
\end{figure}

\begin{figure}
\setlength{\figureheight}{.38\columnwidth}
\setlength{\figurewidth}{.85\columnwidth}
%
%
\definecolor{mycolor1}{rgb}{0.00000,0.44700,0.74100}%
\definecolor{mycolor2}{rgb}{0.85000,0.32500,0.09800}%
\definecolor{mycolor3}{rgb}{0.92900,0.69400,0.12500}%
\begin{tikzpicture}

\begin{axis}[%
width=0.951\figurewidth,
height=\figureheight,
at={(0\figurewidth,0\figureheight)},
scale only axis,
xmin=47.1,
xmax=110.9,
xlabel style={font=\color{white!15!black}},
xlabel={acc. cost at $T=200$},
ymin=0,
ymax=0.4,
ylabel style={font=\color{white!15!black}},
ylabel={counts (norm.)},
ylabel near ticks,
axis background/.style={fill=white},
legend columns=-1,
legend style={legend cell align=left, align=left, draw=white!15!black},
legend pos=north west
]
\addplot[ybar interval, fill=mycolor1, fill opacity=0.6, draw=black, area legend] table[row sep=crcr] {%
x	y\\
50	0.015\\
52	0.085\\
54	0.18\\
56	0.26\\
58	0.25\\
60	0.155\\
62	0.05\\
64	0.005\\
66	0.005\\
};
\addlegendentry{Alg.~3}

\addplot[ybar interval, fill=mycolor2, fill opacity=0.6, draw=black, area legend] table[row sep=crcr] {%
x	y\\
90	0.005\\
92	0.03\\
94	0.075\\
96	0.13\\
98	0.28\\
100	0.335\\
102	0.125\\
104	0.015\\
106	0.005\\
108	0.005\\
};
\addlegendentry{MW}

\addplot[ybar interval, fill=mycolor3, fill opacity=0.6, draw=black, area legend] table[row sep=crcr] {%
x	y\\
86	0.005\\
87	0.03\\
88	0.045\\
89	0.1\\
90	0.215\\
91	0.365\\
92	0.175\\
93	0.06\\
94	0.005\\
95	0.005\\
};
\addlegendentry{Best}

\end{axis}
\end{tikzpicture}%
\caption{The figure shows the histogram of the accumulated cost for problems where the intervals $[\underline{l}_t^i,\bar{l}^i]$ are randomly generated in $[0,1]$. We set $m=100$ and $T=200$, and we have chosen $\epsilon$ as specified in Alg.~\ref{Alg:EMW} and Alg.~\ref{Alg:MW} with $L=1$. The optimization in Alg.~\ref{Alg:EMW} is approximated with Prop.~\ref{Prop:Approx}. ``Best" denotes the best option in hindsight and ``MW" multiplicative weights. We note that Alg.~\ref{Alg:EMW} largely outperforms the multiplicative weights algorithm.}\label{Fig:RI2}
\end{figure}
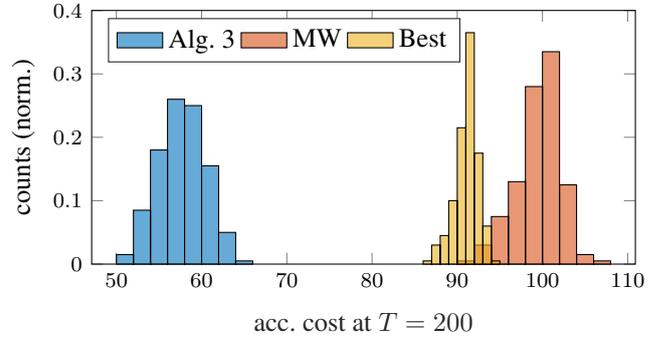

\subsection{Nonlinear online identification}
The decision-making framework discussed in this article can be used for nonlinear and online system identification, which we will illustrate next. We assume that there exists a finite number of candidate models, for example
\begin{equation}
x_{t+1}=\theta x_t (1-x_t), \quad \theta \in \{\theta^1,\dots,\theta^m\}, \label{eq:logisticmap}
\end{equation}
where $x_t \in \mathbb{R}$ denotes the state at time $t$, and where one of the candidate models accurately describes the behavior of the underlying unknown system. The losses $l_t^i$ are chosen as the one-step prediction error,
\begin{equation*}
l_t^i=|\hat{x}_{t+1}-\theta^i \hat{x}_t (1-\hat{x}_t)|,
\end{equation*}
where $\hat{x}_{t+1}$ denotes the state of the underlying system at time $t+1$. The dynamics specified in \eqref{eq:logisticmap} are known to exhibit chaos, depending on the value of $\theta$.
 We note that our results have a logarithmic dependency on the number of options $m$. This means that if $\theta$ arises from a discretization of a bounded region in $n_p$ dimensions with width $\delta$, $\log(m) \sim n_p \log(1/\delta)$ and hence the regret is expected to scale with $\sqrt{n_p}$, which is relatively mild. In our numerical example we set $\theta^i$ to be an equidistant discretization with $m=50$ of the interval $[3.0,3.9]$. Moreover, the underlying dynamics were set to
\begin{equation*}
\hat{x}_{t+1}=(3.57+n_t) \hat{x}_t (1-\hat{x}_t),\quad n_t \sim \text{Unif}([-0.05,0.05]),
\end{equation*}
where the noise samples $n_t$ are independent. We note that the value of 3.57 exhibits chaos and lies between $\theta^{32}$ and $\theta^{33}$. The intervals $\underline{l}_t^i$ and $\bar{l}_t^i$ are obtained by maximizing, respectively minimizing $|(3.57+n-\theta^i) \hat{x}_t (1-\hat{x}_t)|$ for $n\in [-0.05,0.05]$. The evolution of the resulting accumulated cost is shown in Fig.~\ref{Fig:SisID} and we have again approximated \eqref{eq:prob1} with Prop.~\ref{Prop:Approx}. The plot indicates that Alg.~\ref{Alg:EMW} has again a competitive edge on multiplicative weights, even though for this example, the difference is less pronounced. We believe that this is due to two factors: i) the information contained in the intervals $[\underline{l}_t^i,\bar{l}_t^i]$ is limited, ii) the environment, which chooses the losses is relatively benign in the sense that it closely follows the model given by $\theta^{32}$ and/or $\theta^{33}$.

\begin{figure}
\setlength{\figureheight}{.4\columnwidth}
\setlength{\figurewidth}{.85\columnwidth}
%
%
\begin{tikzpicture}

\begin{axis}[%
width=0.951\figurewidth,
height=\figureheight,
at={(0\figurewidth,0\figureheight)},
scale only axis,
xmin=0,
xmax=50,
xlabel style={font=\color{white!15!black}},
xlabel={iterations $t$},
ymin=0,
ymax=0.25,
ylabel style={font=\color{white!15!black}},
ylabel={acc. prediction error},
axis background/.style={fill=white},
legend columns=-1,
legend style={legend cell align=left, align=left, draw=white!15!black},
legend pos=north west
]
\addplot [color=black]
  table[row sep=crcr]{%
1	0\\
2	0.0357666019479123\\
3	0.0516962660890816\\
4	0.0621989578214326\\
5	0.0692969415605463\\
6	0.0759362260489359\\
7	0.0804619870570066\\
8	0.0859937938672743\\
9	0.0896503019263637\\
10	0.0959916572252258\\
11	0.0973970079324312\\
12	0.102144297278968\\
13	0.104522794643486\\
14	0.110237226928442\\
15	0.111851781851925\\
16	0.11724945880957\\
17	0.119814464895712\\
18	0.122907683206077\\
19	0.12431996883003\\
20	0.126862941071038\\
21	0.130029280579328\\
22	0.13259886718288\\
23	0.133984531434706\\
24	0.138210968389674\\
25	0.139447312112662\\
26	0.146136946000155\\
27	0.147545961728444\\
28	0.149430993149438\\
29	0.151260318442964\\
30	0.154015731561189\\
31	0.156064722351564\\
32	0.160214724474434\\
33	0.161733328713557\\
34	0.16525381484267\\
35	0.167321461849098\\
36	0.169357231740467\\
37	0.17163881506335\\
38	0.176011359520403\\
39	0.178199092978793\\
40	0.179500178678664\\
41	0.180387519214708\\
42	0.182946621420452\\
43	0.183317690042109\\
44	0.189021840574216\\
45	0.189522358080861\\
46	0.191887805374676\\
47	0.192204031472932\\
48	0.195834690416312\\
49	0.199482476254401\\
50	0.202174752499527\\
};
\addlegendentry{Alg.~3}

\addplot [color=red]
  table[row sep=crcr]{%
1	0\\
2	0.0508631505991063\\
3	0.0687231300020937\\
4	0.0798170041614423\\
5	0.0866630662488148\\
6	0.0930753872011186\\
7	0.0975063369429313\\
8	0.102727093587912\\
9	0.10650938770133\\
10	0.112046822557388\\
11	0.113449921667697\\
12	0.117689799695642\\
13	0.119978109261367\\
14	0.12589430507597\\
15	0.12747826259424\\
16	0.132505758194642\\
17	0.134996152911904\\
18	0.137809613137537\\
19	0.139192540452977\\
20	0.141553959876643\\
21	0.144643292867934\\
22	0.147018126531549\\
23	0.148375363937742\\
24	0.152564326014835\\
25	0.153808489139973\\
26	0.160469753486448\\
27	0.161897489284458\\
28	0.163447855856881\\
29	0.165251599760012\\
30	0.167837139918698\\
31	0.169904615640616\\
32	0.17391812401211\\
33	0.175402408777453\\
34	0.17865688844595\\
35	0.180744150064583\\
36	0.182513231867447\\
37	0.184814957812667\\
38	0.189261654212986\\
39	0.191432679676571\\
40	0.192405789134335\\
41	0.193296727704896\\
42	0.195563160840514\\
43	0.19593909641306\\
44	0.201544387054982\\
45	0.202028908168581\\
46	0.204284851006117\\
47	0.204604323929013\\
48	0.208285130416763\\
49	0.21194012937159\\
50	0.214447011605748\\
};
\addlegendentry{MW}

\addplot [color=black, dashed]
  table[row sep=crcr]{%
1	0\\
2	8.75009458118914e-05\\
3	0.000851211347869807\\
4	0.00518365682349398\\
5	0.00782989264191669\\
6	0.00908173194404882\\
7	0.0101243482897225\\
8	0.012315759612551\\
9	0.0152564124677886\\
10	0.0197990774496737\\
11	0.0201155503219162\\
12	0.0241947337700266\\
13	0.0261693760169298\\
14	0.0304848024511048\\
15	0.0318803575090226\\
16	0.0374762230488341\\
17	0.0405393155000847\\
18	0.0409921153109883\\
19	0.0428354352585637\\
20	0.04342029540049\\
21	0.0473667427511066\\
22	0.0474907082572089\\
23	0.0494288399262148\\
24	0.051809881869114\\
25	0.0524418624954034\\
26	0.0577185202311454\\
27	0.0585771363421718\\
28	0.0586952582398385\\
29	0.0605798753318845\\
30	0.0629132846749035\\
31	0.0645931020091887\\
32	0.0677014587530016\\
33	0.0688935060905428\\
34	0.0712329456637397\\
35	0.073036508995649\\
36	0.0742247060405337\\
37	0.0762871809052927\\
38	0.0811036361580574\\
39	0.0835110493744031\\
40	0.083573947518264\\
41	0.0842130650351083\\
42	0.0857699046436178\\
43	0.0859019083537373\\
44	0.0911346573618471\\
45	0.0912890081565526\\
46	0.0933777596343693\\
47	0.0935201760897036\\
48	0.0973950519142732\\
49	0.100843994873062\\
50	0.102956998955766\\
};
\addlegendentry{Best}

\end{axis}
\end{tikzpicture}%
\caption{The figure shows the accumulated prediction error as a function of the number of iterations. ``MW" refers to multiplicative weights and ``Best" to the best single action in hindsight. }\label{Fig:SisID}
\end{figure}
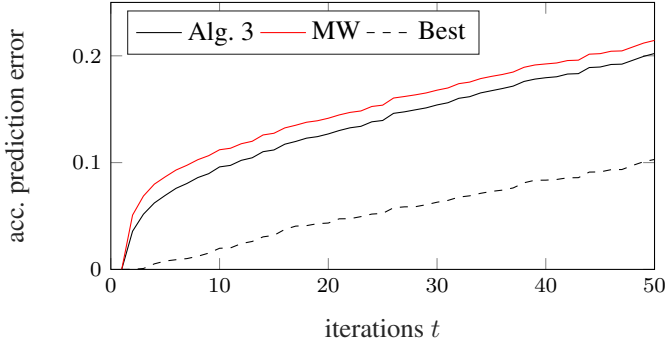

\section{Conclusion}\label{Sec:Conclusion}
The article analyzed adaptive decision-making problems, where an agent repeatedly chooses among $m$ options and aims at minimizing regret. We considered a setting where the losses are constrained, which provides the agent with additional information. We developed an algorithm that accounts for the constraints and is instance-adaptive, i.e., it exploits suboptimal choices of the adversary. The algorithm relies on the solution of an optimization problem at every iteration and largely outperforms multiplicative weights. We also discussed an approximation strategy to reduce the computational effort and highlighted an application to nonlinear and online system identification.

\bibliography{ifacconf}             
                                                   






\appendix
\section{Proof of Prop.~3.1}\label{App:proof}   
We again analyze $\phi_t=\sum_{i=1}^{m} w_{t}^i$, which we express as
\begin{align*}
\phi_t &= \phi_{t-1}^{\frac{\epsilon_{t}}{\epsilon_{t-1}}} \sum_{i=1}^{m} \left(\frac{w_{t-1}^i}{\phi_{t-1}}\right)^{\frac{\epsilon_t}{\epsilon_{t-1}}} e^{-\epsilon_t l_t^i}\\
=& \phi_{t-1}^{\frac{\epsilon_{t}}{\epsilon_{t-1}}} e^{-\epsilon_t \sum_{i=1}^{m} p_t^i l_t^i} \sum_{i=1}^{m} \left(\frac{w_{t-1}^i}{\phi_{t-1}} e^{-\epsilon_{t-1} (l_t^i- \sum_{j=1}^{m} p_t^j l_t^j)}\right)^{\frac{\epsilon_t}{\epsilon_{t-1}}}\!\!\!,
\end{align*}
for any $t\geq 0$. Due to the fact that $\epsilon_{t} \leq \epsilon_{t-1}$ we can invoke the following variant of Jensen's inequality
\begin{equation*}
\sum_{i=1}^{m} x_i^\alpha \leq m^{1-\alpha} \left(\sum_{i=1}^{m} x_i\right)^{\alpha},
\end{equation*}
which holds for any $x_1\geq 0,\dots, x_m \geq 0$ and $\alpha \in (0,1]$. We obtain therefore the following upper bound on $\phi_t$
\begin{equation*}
\phi_{t-1}^{\frac{\epsilon_{t}}{\epsilon_{t-1}}} e^{-\epsilon_t p_t^i l_t^i} m^{1-\frac{\epsilon_t}{\epsilon_{t-1}}}   \left( \underbrace{\sum_{i=1}^{m} \frac{w_{t-1}^i}{\phi_{t-1}} e^{-\epsilon_{t-1} (l_t^i- p_t^j l_t^j)}}_{\text{part~ii}}\right)^{\frac{\epsilon_t}{\epsilon_{t-1}}},
\end{equation*}
where we used Einstein's summation convention to simplify the notation. We apply Taylor's expansion to the exponential in the sum over $w_{t-1}^i$, which yields
\begin{multline*}
\text{part ii} =  1- \epsilon_{t-1} \sum_{i=1}^{m} \frac{w_{t-1}^i}{\phi_{t-1}} (l_t^i-p_t^j l_t^j) \\
+ \frac{\epsilon_{t-1}^2}{2} \sum_{i=1}^{m} \frac{w_{t-1}^i}{\phi_{t-1}} (l_t^i - p_t^j l_t^j)^2 + \frac{\epsilon_{t-1}^3}{6} \rho_t,
\end{multline*}
where the remainder term $\rho_t$ is bounded by $L^3 e^{L \epsilon_{t-1}}$. By choosing $p_t=w_{t-1}/\phi_{t-1} - \epsilon_{t-1} Q_{t-1} q_t^*/2$, see Alg.~\ref{Alg:EMW}, we thus obtain
\begin{equation*}
\text{part ii} \leq 1+ \frac{\epsilon_{t-1}^2}{2} \tilde{r}_t^* + \frac{\epsilon_{t-1}^3}{6} \rho_t \leq e^{\frac{\epsilon_{t-1}^2}{2} \tilde{r}_t^* + \frac{\epsilon_{t-1}^3}{6}\rho_t},
\end{equation*}
which gives rise to 
\begin{equation*}
\phi_t \leq \phi_{t-1}^{\frac{\epsilon_{t}}{\epsilon_{t-1}}} m^{1-\frac{\epsilon_t}{\epsilon_{t-1}}}  e^{-\epsilon_{t} p_t^i l_t^i + \frac{\epsilon_t \epsilon_{t-1}}{2} \tilde{r}_t^* + \frac{\epsilon_t \epsilon_{t-1}^2}{6} \rho_t},
\end{equation*}
for any $t\geq 0$. Thus, after unrolling the recursion we obtain
\begin{multline*}
\phi_{t} \leq \phi_{-1}^{\epsilon_t/\epsilon_{-1}} m^{ \epsilon_t \sum_{j=0}^{t} \frac{\epsilon_{j-1}- \epsilon_j}{\epsilon_{j-1} \epsilon_j} } \\ e^{-\epsilon_t \sum_{j=0}^{t} p_j^i l_j^i} e^{\epsilon_t \sum_{j=0}^{t} \frac{\epsilon_{j-1}}{2} \tilde{r}_j^* + \frac{\epsilon_{j-1}^2}{6} \rho_j}.
\end{multline*}
We further note that, by construction, $\phi_t \geq e^{-\epsilon_t \sum_{j=0}^{t} l_j^i}$ for any $i\in \{1,\dots,m\}$, which yields after rearranging terms
\begin{multline*}
\sum_{t=0}^{T-1} p_t\T l_t - \!\!\!\!\min_{i\in \{1,\dots,m\}} \!\sum_{t=0}^{T-1} l_t^i \leq \frac{\log(m)}{\epsilon_{-1}} + \log(m) \sum_{t=0}^{T-1} \frac{\epsilon_{t-1}-\epsilon_t}{\epsilon_{t-1}\epsilon_t} \\
+ \sum_{t=0}^{T-1} \frac{\epsilon_{t-1} \tilde{r}_t^*}{2} +\frac{\epsilon_{t-1}^2}{6} \rho_t.
\end{multline*}
We introduce the variable $s_{t}:=\sum_{j=0}^{t} \tilde{r}_j^*+c_2$, which means that $\epsilon_{t}$ can be expressed as $\sqrt{2\log(m)/s_t}$. As a result,
\begin{align}
\sum_{t=0}^{T-1} \frac{\epsilon_{t-1}-\epsilon_t}{\epsilon_{t-1} \epsilon_t} &= \frac{1}{\sqrt{2 \log(m)}} \sum_{t=0}^{T-1} \sqrt{s_t} - \sqrt{s_{t-1}} \label{eq:claim0}\\
&=\frac{\sqrt{s_{T-1}} - \sqrt{L^2/4}}{\sqrt{2 \log(m)}}=\sqrt{\frac{s_{T-1}}{2 \log(m)}} - \frac{\text{log}(m)}{\epsilon_{-1}}. \nonumber
\end{align}
We now prove the following claim.\\
\noindent\textbf{Claim 1:} 
\begin{equation}
\sum_{j=0}^{t} \frac{\tilde{r}_j^*}{\sqrt{s_{j-1}}} \leq 2 \sqrt{2 s_t}, \quad \forall t\geq 0. \label{eq:claim1}
\end{equation}
The claim is proved by induction. The claim can be easily verified for $t=0$. We therefore proceed with the induction step, that is, we assume that the claim holds for $t\geq 0$, and show that it is also satisfied for $t+1$. More precisely, the induction hypothesis implies
\begin{align*}
\sum_{j=0}^{t+1} \frac{\tilde{r}_j^*}{\sqrt{s_{j-1}}} =  \frac{\tilde{r}_{t+1}^*}{\sqrt{s_t}} + \sum_{j=0}^{t} \frac{\tilde{r}_j^*}{\sqrt{s_{j-1}}} \leq \frac{\tilde{r}_{t+1}^*}{\sqrt{s_t}} + 2 \sqrt{2 s_t}.
\end{align*}
The right-hand side can further be bounded as
\begin{align*}
\frac{\tilde{r}_{j+1}^*}{\sqrt{s_j}} + 2 \sqrt{2} \sqrt{s_{t+1} - \tilde{r}_{t+1}^*} \leq \frac{\tilde{r}_{j+1}^*}{\sqrt{s_j}} + 2 \sqrt{2 s_{t+1}} \left(1-\frac{\tilde{r}_{t+1}^*}{2s_{t+1}}\right),
\end{align*}
where we have used $s_t=s_{t+1}-\tilde{r}_{t+1}^*$ in the first step and $\sqrt{1-\xi} \leq 1-\xi/2$ for all $\xi \in [0,1]$ in the second step. We therefore obtain
\begin{align*}
\sum_{j=0}^{t+1} \frac{\tilde{r}_j^*}{\sqrt{s_{j-1}}} \leq 2 \sqrt{2 s_{t+1}} + \frac{\tilde{r}_{t+1}^*}{\sqrt{s_{t+1}}} \left( \frac{\sqrt{s_{t+1}}}{\sqrt{s_t}} - \sqrt{2}\right).
\end{align*}
Due to the fact that $\tilde{r}_j^* \leq L^2/4$ we obtain $s_{t+1}/s_t \leq 2$. This means the the claim holds true for $t+1$. By induction, we can therefore infer that the claim is true. \hfill \qed

We make a second claim, which we will prove subsequently.\\
\noindent\textbf{Claim 2:}
\begin{equation}
\sum_{j=0}^{t} \frac{1}{s_{j-1}} \leq \frac{1}{\bar{r}^*} \text{log}(4 s_t/L^2) + \frac{1}{c_2},\quad \forall t\geq 0. \label{eq:claim2}
\end{equation}
The claim is again proved by induction and we note that it holds (trivially) for $t=0$. We perform the step from $t$ to $t+1$ as follows
\begin{align*}
\sum_{j=0}^{t+1} &\frac{1}{s_{j-1}} \leq \frac{1}{s_{t+1}} + \frac{1}{\bar{r}^*} \log(4 s_t/L^2) + \frac{1}{c_2}\\
&=\!\frac{1}{s_{t+1}}\!+ \! \frac{1}{\bar{r}^*} \log(4s_{t+1}/L^2) \!+\! \frac{1}{\bar{r}^*} \log(1\!-\!\tilde{r}_{t+1}^*/s_{t+1})\!+\!\frac{1}{c_2}.
\end{align*}
Due to the fact that $\log(1-\xi) \leq -\xi$ for all $\xi\leq 1$, we obtain
\begin{equation*}
\sum_{j=0}^{t+1} \frac{1}{s_{j-1}} \leq \frac{1}{\bar{r}^*} \log(4 s_{t+1}/L^2) + \frac{1-\tilde{r}_{t+1}^*/\bar{r}^*}{s_{t+1}} + \frac{1}{c_2}.
\end{equation*}
The term $1-\tilde{r}_{t+1}^*/\bar{r}^*$ in the previous equation is negative, since $\tilde{r}_{t+1}^*\geq \bar{r}^*$, see Alg.~\ref{Alg:EMW}, which proves the claim. \hfill \qed

We can now combine all the ingredients (\eqref{eq:claim0}, \eqref{eq:claim1}, and \eqref{eq:claim2}) and obtain the following regret bound
\begin{multline*}
\sum_{t=0}^{T-1} p_t\T l_t - \!\!\!\!\min_{i\in \{1,\dots,m\}} \!\sum_{t=0}^{T-1} l_t^i \leq (\frac{\sqrt{2}}{2}+2) \sqrt{s_{T-1} \log(m)} \\+ \frac{\log(m) \exp(1) L^3}{3 c_2} + \frac{L^3 \exp(1)}{6 \bar{r}^*} \log(4 s_{T-1}/L^2).
\end{multline*}
We can further use the inequality $\sqrt{a+b} \leq \sqrt{a} + \sqrt{b}$ for all $a,b\geq 0$ and the fact that $c_2=2 \log(m) L^2 \geq 2 \log(2) L^2$ to rewrite the above bound as
\begin{multline*}
\sum_{t=0}^{T-1} p_t\T l_t - \!\!\!\!\min_{i\in \{1,\dots,m\}} \!\sum_{t=0}^{T-1} l_t^i \leq 3 \sqrt{\log(m) \sum_{j=0}^{T-1} \tilde{r}_{j}^*} + 3\sqrt{c_2 \log{m}}\\
+\frac{L}{2} + \frac{L^3}{2 \bar{r}^*} (\log(4s_{T-1}/L^2)).
\end{multline*}
The remaining steps are simple manipulations, which also rely on the fact that $4/s_{T-1}/L^2 \geq 8 \log(m)\geq 8 \log(2)$ and $L^2/4\geq \bar{r}^*$. \hfill \qed

\section{Proof of Prop.~4.1}\label{App:B}
We note that for small $\epsilon$, $\hat{q}_t=\mu_t$, and introduce the notation $l_t^{ij}:=\bar{l}_t^i-\underline{l}_t^j$. As a result, we obtain
\begin{multline}
\max_{l_t\in X_t} l_t\T Q_{t-1} (l_t-\hat{q}_t) = \\ \max_{l_t\in X_t} \sum_{i,j=1}^{m} \frac{w_{t-1}^i w_{t-1}^j}{2\phi_{t-1}^2} (l_t^i-l_t^j) (l_t^i- l_t^j - l_t^{ij}+ l_t^{ji}). \label{eq:tmptmp1}
\end{multline}
We further note that $-l_t^{ji}=\underline{l}_t^i-\bar{l}_t^j \leq l_t^i- l_t^j \leq \bar{l}_t^i-\underline{l}_t^j=l_t^{ij}$, and consider a single summand on the right-hand side of the above equation. We denote this summand as 
\begin{equation*}
f_{ij}(l_t^i-l_t^j):=\frac{w_{t-1}^i w_{t-1}^j}{2\phi_{t-1}^2} (l_t^i-l_t^j) (l_t^i- l_t^j - l_t^{ij}+ l_t^{ji}),
\end{equation*}
and conclude that $f_{ij}$ is convex in $l_t^i-l_t^j$. Thus, for any $\lambda\in [0,1]$ we have
\begin{equation*}
f_{ij}(l_t^i-l_t^j) \leq \lambda f_{ij}(-l_t^{ji}) + (1-\lambda) f_{ij}(l_t^{ij}),
\end{equation*}
where both $f_{ij}(-l_t^{ji})$ and $f_{ij}(l_t^{ij})$ evaluate to the same value $w_{t-1}^i w_{t-1}^j l_t^{ji} l_t^{ij}/2\phi_{t-1}^2$. By applying the same argument to all summands in \eqref{eq:tmptmp1}, we conclude
\begin{equation*}
\max_{l_t\in X_t} l_t\T Q_{t-1} (l_t-\hat{q}_t) \leq \sum_{i,j=1}^{m} \frac{w_{t-1}^i w_{t-1}^j}{2\phi_{t-1}^2} l_t^{ij} l_t^{ji},
\end{equation*}
which yields the desired result. \hfill \qed

\section{Positive semi-definiteness of $Q_{t-1}$}\label{App:C}
We note that for any $x\in \mathbb{R}^m$, $x\T Q_{t-1}x=\sum_{i=1}^m (x^i)^2 u^i - (\sum_{i=1}^m u^i x^i)^2$, where $u:=w_{t-1}/\phi_{t-1}$. According to Sedrakyan's inequality we have
\begin{equation*}
(\sum_{i=1}^m u^i x^i)^2 = \frac{(\sum_{i=1}^m u^i x^i)^2}{\sum_{i=1}^m u^i} \leq \sum_{i=1}^m u^i (x^i)^2,
\end{equation*}
which concludes that $Q_{t-1}$ is indeed positive semi-definite.
\end{document}